\definecolor{grayhighlight}{rgb}{0.9, 0.9, 0.9}
\newcommand{\blue}[1]{\textbf{\textcolor{blue}{#1}}}
\definecolor{theorem_blue}{HTML}{007EAD}
\theoremstyle{definition}
\newtheorem{definition}{Definition}[section]
\icmltitlerunning{Self-Supervised Dynamical System Representations for Physiological Time-Series}
\begin{document}

\twocolumn[
  \icmltitle{Self-Supervised Dynamical System Representations \\ for Physiological Time-Series}

  \icmlsetsymbol{equal}{*}

  \begin{icmlauthorlist}
    \icmlauthor{Yenho Chen}{gtml}
    \icmlauthor{Maxwell A. Xu}{uiuc,google}
    \icmlauthor{James M. Rehg}{uiuc}
    \icmlauthor{Christopher J. Rozell}{gtece}
  \end{icmlauthorlist}

  \icmlaffiliation{gtml}{ML@GT, Georgia Institute of Technology, Atlanta, Georgia}
  \icmlaffiliation{uiuc}{Health Care Engineering Systems Center, University of Illinois Urbana-Champaign, Champaign, Illinois}
  \icmlaffiliation{google}{Google}
  \icmlaffiliation{gtece}{School of Electrical and Computer Engineering, Georgia Institute of Technology, Atlanta, Georgia}

  \icmlcorrespondingauthor{Yenho Chen}{yenho@gatech.edu}
  \icmlcorrespondingauthor{Christopher Rozell}{crozell@gatech.edu}

  \icmlkeywords{Self-Supervised Learning, Dynamical Systems, Physiological Time-series}
  \vskip 0.3in
]

\printAffiliationsAndNotice{}  %

\begin{abstract}
Self-supervised learning for physiological time-series aims to captures the identity of the underlying dynamical process while filtering irrelevant noise. 
However, existing approaches may obscure the clinical semantics important for downstream transferability. Weakly constrained pretext tasks (i.e. contrastive learning, MAE) may incorrectly ignore the underlying dynamical structure, while structurally constrained models (i.e. SVAEs) are unable to selectively filter sample-specific noise.  
To bridge this gap, we propose {\bf PULSE}, a novel pretraining objective that simultaneously preserves dynamical relationships important to physiological time-series while selectively removing irrelevant noise. 
We achieve this by formulating a dynamical systems model to identify transferable and non-transferable information between time-series windows, and target the former through a novel cross-reconstruction objective.
We establish theory that provides conditions for when transferrable information is recovered, and empirically validate it through synthetic experiments.
On several real-world datasets, PULSE effectively distinguishes clinical semantic classes, increases label efficiency, and improves transfer learning performance. 

\end{abstract}

\section{Introduction}
\begin{figure*}
\begin{center}
  \includegraphics[width=\textwidth]{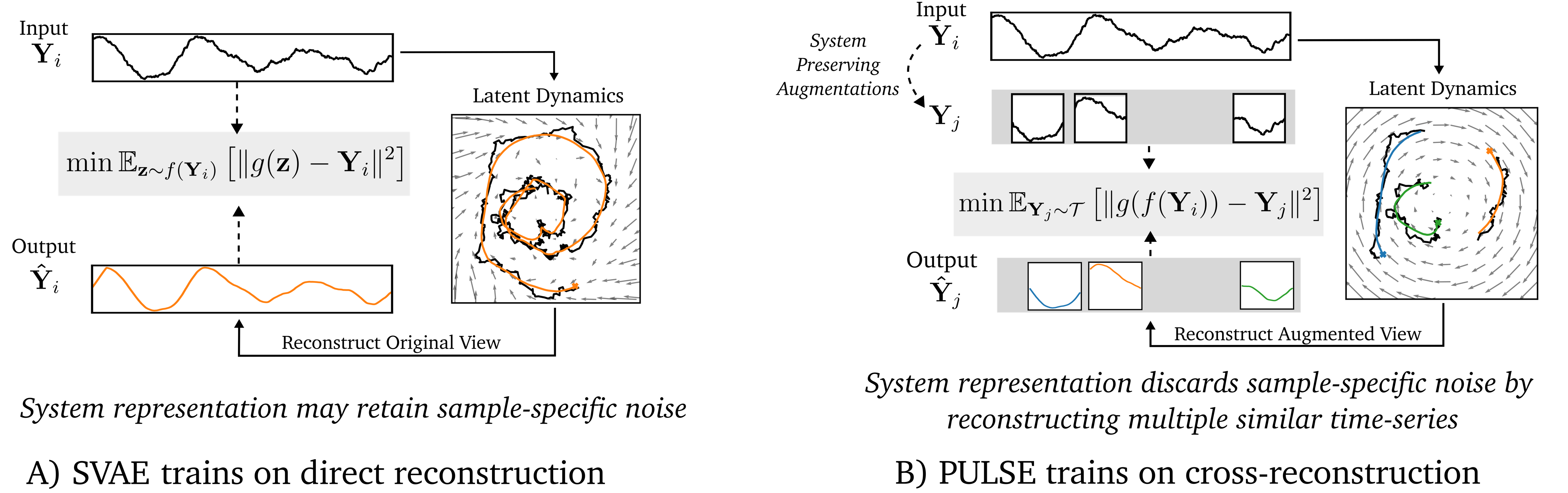}
    \caption{Comparing Dynamical System Representations. {\bf (A)} SVAE couple system estimation and reconstruction to the same input, which can confound the underlying dynamics with noise. {\bf (B)} PULSE decouples these processes by estimating a system from ${\bf Y}_i$ that reconstructs augmented views ${\bf Y}_j\sim\mathcal{T}({\bf Y}_i)$, forcing the model to capture shared dynamics and discard sample-specific noise.
  }
  \vspace{-0.5cm}
  \label{fig:intro}
\end{center}
\end{figure*}

Self-supervised learning (SSL) is a powerful framework for learning general-purpose representations from unlabeled physiological time series. These representation can be used to track physiological states, detect diseases, and improve our understanding of biology \citep{perochon2024time, chen2023self, li2025self}. 

To learn these representations, many time-series SSL strategies have emerged to selectively preserve information relevant to the identity of the underlying physiological process while filtering out irrelevant noise~\citep{tian2020makes}. These methods can be categorized into those with weakly constrained pretext tasks that prioritize downstream transferability, such as Contrastive Learning (CL) and Masked Autoencoding (MAE)~\citep{xu2024rebar, geenjaar2025citrus}, and those with structurally constrained pretext tasks that recover specific components of a factorized latent generative process, such as Sequential Variational Autoencoders (SVAEs)~\citep{sedler2023lfads}.

\vspace{-0.5mm}
These approaches have complimentary strengths and limitations. While weakly constrained approaches include explicit mechanisms to isolate signal from specific sources of noise, the lack of structural constraints may lead the model to inadvertently learn incorrect relationships or incorrectly discard clinically 
important signal components. For instance, CL explicitly removes noise information to obtain representations that are invariant across positive pairs~\citep{chen2020simple}. Yet, positive pairs formed through common augmentations (e.g. jittering, scaling) may change a signal's clinical identity 
which can incorrectly collapse samples with different clinical diagnoses. Similarly, while the MAE objective filters information that is not shared across masked views~\citep{kong2023understanding}, standard masking strategies allow the model to use future context to reconstruct past segments, which may prioritize non-causal spurious relationships over those that reflect the causal dynamics of physiological processes.

\vspace{-0.5mm}
Conversely, structurally constrained models such as SVAEs learn a latent dynamical systems model that explicitly preserves causal temporal dependencies that are essential for describing individual physiological time-series windows~\citep{girin2022dynamical}. However, these models lack a mechanism for selectively removing sample-specific noise and instead may compress the signal arbitrarily~\citep{ren2018robust}, limiting their transferability across clinically related time-series. This is due to the autoencoding objective~\citep{alemi2018fixing}, which penalizes \emph{any} deviation from the raw input. As a result, the model may encode noise (i.e. recording offsets, transient fluctuations), potentially obscuring clinically relevant patterns.

To bridge weakly and structurally constrained methods, we introduce a novel pretraining approach that simultaneously preserves temporal dependencies through a latent dynamical systems model while selectively removing sample-specific noise called {\bf PULSE} ({\bf P}hysiological self-s{\bf U}pervised {\bf L}earning using {\bf S}ystem {\bf E}ncoders)\footnote{Code available at:
\href{https://github.com/yenhochen/PULSE}{https://github.com/yenhochen/PULSE}}. 
In doing so, we avoid the issues of weakly constrained pretext tasks while also improving the transferability of structurally constrained representations.
Unlike SVAEs which model dynamics \emph{within} individual time-series, PULSE models consider generative structure \emph{between} similar time-series. 
By modeling mutliple similar time-series, we reveal sources of information that should be kept and discarded during pretraining. 
This description leads to our key insight: \emph{system information} related to the generative parameters should be preserved since it is transferrable between independent time series produced by the same process, whereas \emph{sample-specific information} about factors that are unique to each sample, such as initial conditions and process noise, is non-transferrable and should be discarded.
As illustrated in Figure~\ref{fig:intro}, we target this system information through a novel cross-reconstruction task that explicitly removes sample-specific noise by decoupling system estimation from reconstructed samples. 
In several synthetic and real datasets, we show how this leads to performance consistent improvements over existing baselines across several downstream tasks.
Our contributions are summarized as follows:
\begin{enumerate}[leftmargin=*,noitemsep,nosep]
    \item 
    This is the first work to use dynamical systems within a cross-reconstruction framework to model dependencies between time-series samples and to selectively separate transferable information from noise. 
    \item 
    We explore theory for PULSE that provides conditions for when system information is recovered and empirically validate this theory with a synthetic experiment.
    \item In many real-world datasets, PULSE achieves SOTA performance, outperforming competitive baselines in linear evaluation, label efficiency, and transferability.
\end{enumerate}

\vspace{-2mm}
\section{Background and Related Work}

{\bf Self-Supervised Learning for Time Series.}  
CL and MAE are the most studied weakly constrained paradigms for time-series due to their success in computer vision~\citep{gui2024survey, li2024anatomask,li2025medvista3d}. 
In CL, the design of positive pairs determines what information the representation is invariant to~\citep{tian2020makes}. 
Positive pairs may be formed via augmentations (i.e. scaling, shifting, or jittering in SimCLR~\citep{chen2020simple}), through sampling strategies (i.e. selecting time-neighbors in TNC~\citep{tonekaboni2021unsupervised}, or overlapping crops in TS2vec~\citep{yue2022ts2vec}), or through a learned reconstruction measure as in REBAR~\citep{xu2024rebar}.
When applied to physiological time-series these methods may yield inconsistent performance across datasets~\citep{liu2024guidelines} due to false positive pairs that incorrectly collapse different clinical states together.
In MAE, the masking strategy filters out information not shared between complementary masked views~\citep{kong2023understanding}. For instance, block masking in TimeMAE~\citep{cheng2026timemae} or patch masking with channel independence in PatchTST~\citep{nie2022time} to capture shared information across masked windows. In contrast, channel masking \cite{wang2024improved} captures inter-channel dependencies. However, these masking strategies often treat time-series as images or language tokens, and ignore the temporal constraints of physiological systems. This allows for the possibility to exploit spurious shortcuts rather than learning representations related to the underlying dynamics. In PULSE, we avoid these limitations by imposing structural dynamical systems constraints that explicitly preserves important temporal relationships.

{\bf Dynamical Systems Models of Physiological Signals.} 
Many physiological time-series arise from physical and biochemical processes governed by dynamical systems. Thus, these systems provide a shared modeling framework for characterizing the sources of information available in physiological signals.
One prominent approach for modeling time-series generative processes are the \emph{state-space models} (SSMs).
In this framework, physiological time series are described as the evolution of a latent state ${\bf x}_{t_0}\in\mathbb{R}^n$ initialized at time $t_0$ that evolves forward through a dynamics function ${\bf x}_{t+1} = g_x({\bf x}_t;\Theta)$, parameterized by system variables $\Theta$. This latent process can then produce measurements ${\bf y}_t\in\mathbb{R}^m$ through an observation function ${\bf y}_t = g_y({\bf x}_t)$.
Although dynamical systems models have been used to study a wide range of physiological processes, including cardiac dynamics~\citep{
bianco2025heartbeat} and brain activity~\citep{chen2024probabilistic, mudrik2024decomposed}, these applications focus on within-sample structure and lack a mechanism to selectively filter sample-specific information within an SSL objective.
In this work, we extend dynamical systems models to describe relationships between time-series samples, which leads to an explicit mechanism for selectively removing sample-specific noise during pretraining.

{\bf Sequential Variational Autoencoders.} 
SVAEs leverage the SSM to learn representations corresponding to components of a dynamical systems. 
Training proceeds via an autoencoding task by maximizing the ELBO for an encoder-decoder network structure to enforce specific generative assumptions.
For instance, LFADS~\citep{sussillo2016lfads} encodes each time series into an latent ${\bf x}_{t_0}$ under the assumption that all time-series evolve under a shared dynamics function.
DSVAE~\citep{yingzhen2018disentangled} assumes that the dynamics is data-dependent and can be factorized into static and dynamic generative factors.
Unlike CL and MAE, which have mechanisms for selectively filtering out noise, SVAEs methods may remove noise arbitrarily. Because the autoencoding task does not distinguish between transferrable signal from noise in the generative model, it instead encourages explaining all observed variability in the data. 
Consequently, the learned representations can be overly sensitive to irrelevant information, reducing their transferability to downstream tasks.

\vspace{-1mm}
{\bf Mixed-Effects Modeling.}  One way to formulate graphical models for time-series data is the hierarchical, mixed-effects approach, where fixed effects capture shared dynamics and random effects capture individual-specific variability \citep{zhang2020mixed, xiong2019mixed, roques2025personalized}. However, these methods typically require the number of global and local components to be specified in advance, which can be difficult to determine in practice.

\vspace{-3mm}
\section{PULSE Approach}

In this section, we present PULSE. First, we identify which information is transferrable through a dynamical systems model that describes captures dependencies between time-series samples. Then, we introduce a practical pretraining strategy to extract the transferrable information while removing noise. Finally, we theoretically analyze conditions under which the transferrable information is recovered.

\noindent {\bf Notation.} A physiological time-series dataset often consists of long continuous time series recordings that are segmented into shorter, fixed-length windows. 
Let $\mathbf{D} \in \mathbb{R}^{R \times T \times M}$ denote a dataset of $R$ recordings, each consisting of $T$ time steps and $M$ measurement channels. 
We use $\mathbf{D}_{r, \tau:h}$ to denote a subsequence from the $r$th recording, starting at time index $\tau$ and ending at index $h$.
We can then construct a dataset of $N$ time-series windows, $\mathbf{Y} \in \mathbb{R}^{N \times W \times M}$, where each sample is defined as $\mathbf{Y}_i = \mathbf{D}_{r_i, \tau_i : \tau_i + W}$ for a window size $W$, with $r_i$ and $\tau_i$ denoting the recording and start index of the $i$th sample, respectively. 
A specific time-slice is denoted as ${\bf Y}_{i,t_1}$, where $t_1 \in \{1, \dots, W\}$, and consecutive time-slices are defined as $t_{k+1} = t_k + 1$, where $k$ counts the steps forward from $t_1$. \looseness=-1

\vspace{-2mm}
\begin{figure}
  \begin{center}
      \includegraphics[width=\columnwidth]{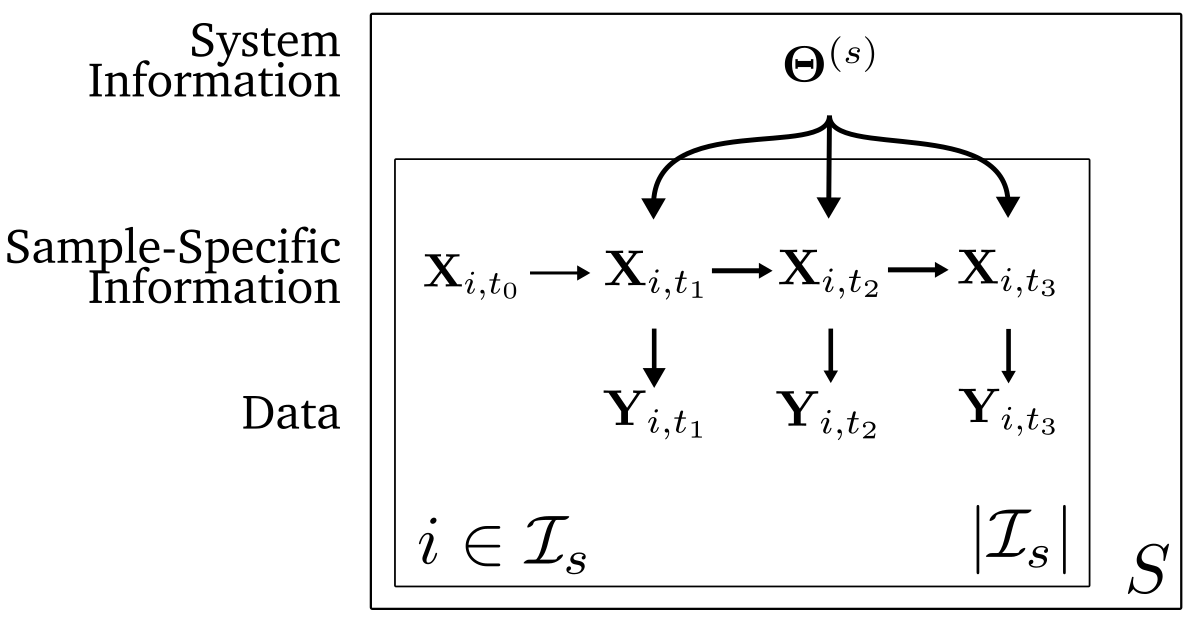}
  \end{center}  
  \caption{
  Our dynamical systems graphical model between time-series windows distinguishes transferable system information shared across time-series from non-transferable information unique to each sample (initial conditions and process noise).
  }
  \label{fig:data_graphical_model}
  \vspace{-4mm}
\end{figure}
\subsection{Locating Shared Information Between Time-Series}
\vspace{-1mm}
As shown in Figure~\ref{fig:data_graphical_model}, we formulate a dynamical systems generative model across all time-series samples ${\bf Y}$, where each sample ${\bf Y}_i$ 
is produced by a latent system with parameters ${\bf \Theta}_i$, and an initial condition ${\bf X}_{i,t_0}$. 
While each sample could be generated by a unique system, physiological activity is often stereotyped, with many underlying processes exhibiting consistent, repeatable patterns over time.
For example, a walk cycle captured by an accelerometer displays repeated phases such as heel strike, mid-stance, and toe-off, while an ECG signal shows recurring PQRST complexes during normal sinus rhythm.
As a result, different samples may share the same underlying system, and the number of unique ${\bf \Theta}_i$ is generally smaller than $N$.
To capture this, we define $\mathcal{I}_s$ as the set of indices for samples generated by system $s\in\{1,\dots, S\}$, and ${\bf \Theta}^{(s)}$ as the system parameters shared across all samples in $\mathcal{I}_s$. In other words, ${\bf \Theta}_i = {\bf \Theta}^{(s)}$ for all $i \in \mathcal{I}_s$.
The joint distribution for this generative model is given by, 
\vspace{-3mm}
\begin{equation}
\label{eq:dataset-level-generative-process}
\begin{split}
&p({\bf Y}, {\bf X}, {\bf \Theta}) = \prod_{s=1}^{S} \prod_{i\in \mathcal{I}_s} 
p({\bf X}_{i, t_0},{\bf \Theta}^{(s)}) \\
&\left[ \prod_{k=1}^{W} p({\bf Y}_{i,t_k} | {\bf X}_{i,t_k}) \right]
\left[ \prod_{k=2}^{W} p ({\bf X}_{i,t_{k}} | {\bf X}_{i,t_{k-1}}, {\bf \Theta}^{(s)} ) \right]
\end{split}
\end{equation}
where ${\bf X}_{i, t_0}$ and ${\bf \Theta}^{(s)}$ are independent. Here, the observation density $p({\bf Y}_{i,t_k} | {\bf X}_{i,t_k})$ is defined by the SSM measurement equation ${\bf Y}_{i,t_k} = g_y({\bf X}_{i,t_k}) + \epsilon_{i,t_k}$ and the transition density $p({\bf X}_{i,t_k} | {\bf X}_{i,t_{k-1}},{\bf \Theta}^{(s)})$ is defined by the SSM transition equation ${\bf X}_{i,t_k} = g_x({\bf X}_{i,t_{k-1}},{\bf \Theta}^{(s)}) + \nu_{i,t_k}$, where $\epsilon_{i, t_k}$ and $\nu_{i,t_k}$ are noise terms. 

The factorization in Eq.~\ref{eq:dataset-level-generative-process} reveals a hierarchy of information in ${\bf Y}$.
One source of information comes from the system variables ${\bf \Theta}^{(s)}$, which govern the evolution of the time series and are shared between all ${\bf Y}_i$ generated by the same system.  
Since ${\bf \Theta}^{(s)}$ is shared across all samples in $\mathcal{I}_s$, it provides information that is transferable across different samples. This observation leads to the following notion of similarity between independent time-series samples.
\vspace{-1mm}
\begin{definition}[Similar Time-Series]
Two time-series ${\bf Y}_i$ and ${\bf Y}_j$ are similar if they are generated by the same system parameters ${\bf \Theta}^{(s)}$, such that both indices satisfy $i,j \in \mathcal{I}_s$.
\end{definition}
\vspace{-1mm}
Therefore, learning a representation that captures this system information produces a space where samples with similar dynamics are naturally grouped together.

Figure~\ref{fig:data_graphical_model} also illustrates sample-specific sources of information that is unique
to each ${\bf Y}_i$, such as the initial value ${\bf X}_{i,t_0}$, as well as observation and dynamics noise $\epsilon$ and $\nu$. 
In our setting, a representation should be invariant to this information, as it is not shared across ${\bf Y}_i$ and therefore cannot be transferred between different samples.
This insight is especially relevant for physiological time series, where a signal's identity is determined by the underlying system dynamics rather than by the exact starting value, sensor noise, or transient fluctuations. Therefore, \emph{by extracting system information and discarding sample-specific information, we obtain a representation that can relate physiological time series based on their temporal characteristics while ignoring irrelevant factors.}

\vspace{-2mm}
\subsection{PULSE for Recovering System Information}

Our goal is to design a practical pretraining strategy that encourages an encoder to recover system information while ignoring sample-specific factors. 
A promising strategy is a dynamical systems \emph{cross-reconstruction} task, where given two samples from the same system (i.e., ${\bf Y}_i$ and ${\bf Y}_j$ with $i,j \in \mathcal{I}_s$), the system information inferred from ${\bf Y}_i$ is used to reconstruct an independently realized sample ${\bf Y}_j$. 
By requiring the system representation from ${\bf Y}_i$ to reconstruct multiple random samples of ${\bf Y}_j$, an encoder is encouraged to keep only the shared information between these samples. According to Eq.~\ref{eq:dataset-level-generative-process}, the only shared variables are ${\bf \Theta}^{(s)}$, and encoding irrelevant factors may lead to poor reconstruction.
\looseness=-1

\begin{figure}
  \begin{center}
      \includegraphics[width=\columnwidth]{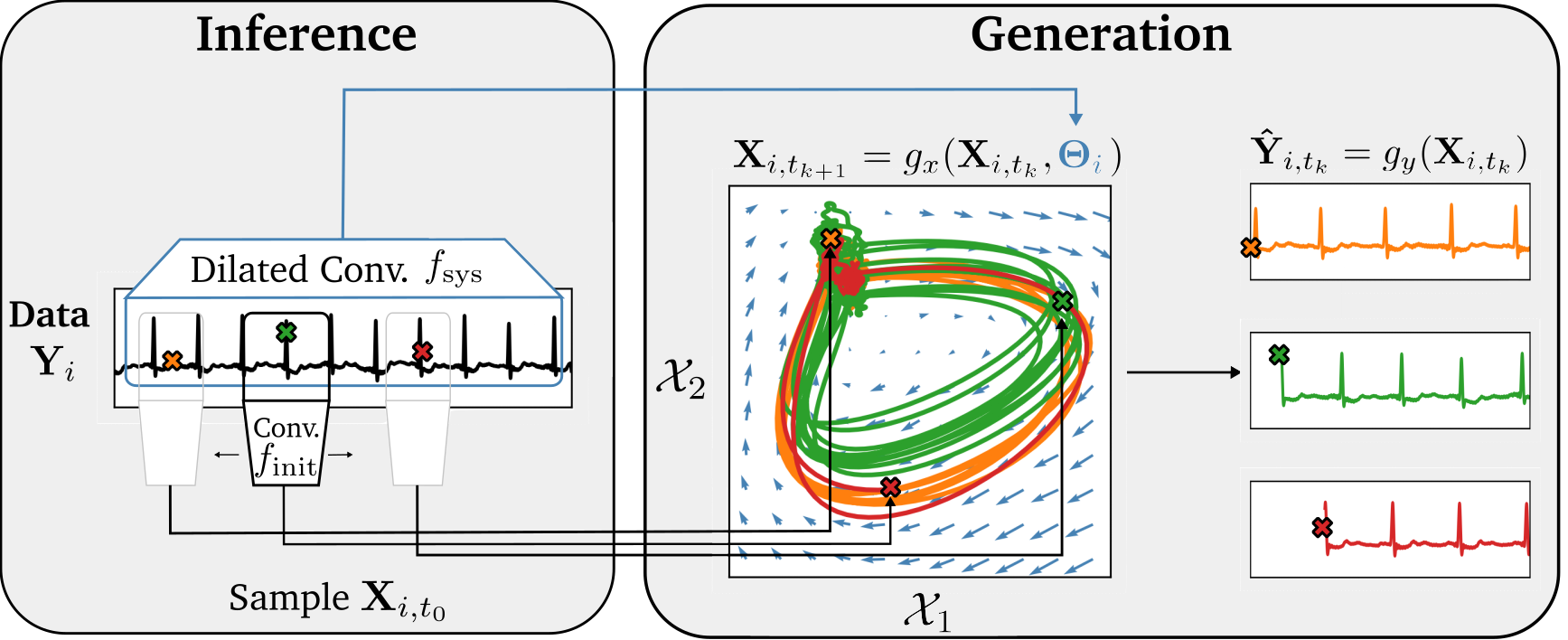}
  \end{center}  
  \caption{
  PULSE recovers system information through an inference process that uses two encoders, $f_{\rm sys}$ to estimate shared parameters of a latent dynamical systems and $f_{\rm init}$ to estimate sample-specific initial conditions.
  By requiring ${\bf \Theta}_i$ to support reconstruction of randomly sampled ${\bf X}_{i, t_0}$, it is encouraged to be invariant to sample-specific factors.
  }
  \vspace{-3mm}
  \label{fig:pulse_model}
\end{figure}

\noindent {\bf Cross-Reconstruction with Similar Pairs.}  
To formalize this approach, we define an inference step and generative process given sample pairs $({\bf Y}_i, {\bf Y}_j)$ produced by ${\bf \Theta}^{(s)}$ as input. For inference, we introduce two encoders that separate transferable from non-transferable dynamical systems components: a system encoder $f_{\rm sys}$ to estimate shared system information and an initial condition encoder $f_{\rm init}$ to estimate the sample-specific initial condition.
As shown in Figure~\ref{fig:pulse_model}, $f_{\rm sys}$ uses the dilated convolution~\citep{yue2022ts2vec} to extract information across the entire window according to ${\bf \Theta}_{i} = f_{\rm sys}({\bf Y}_i)$. In contrast, $f_{\rm init}$ is implemented as a 2-layer CNN whose receptive field is centered around a specified time $t_0$, producing ${\bf X}_{j,t_0} = \left[f_{\rm init}({\bf Y}_j)\right]_{t_0}$ where $t_0 = 1$ selects the initial condition needed to reconstruct a sample from the first time step until the length of the reconstruction window $w$.
For generation, we use an SSM decoder and define the cross-reconstruction objective, \looseness=-1
\vspace{-2mm}
\begin{equation}
\label{eq:cross}
\begin{split}
    &\mathcal{L}_{\rm Cross}({\bf Y}_i, {\bf Y}_j)  \\
    &=  \mathbb{E}_{\mathcal{P}} 
    \left[
    \sum_{k=1}^w\|{\bf Y}_{j,t_k} - g_y(g_x({\bf X}_{j,t_{k-1}} , {\bf \Theta}_{i,t_k})) \|^2 
    \right],
\end{split}
\end{equation}
where $({\bf Y}_i, {\bf Y}_j )\sim\mathcal{P}$ is a distribution over sample pairs $i, j \in \mathcal{I}_s$, and $g_x$ and $g_y$ are parameterized by a GRU and linear projection layer respectively. 
Here, we implement ${\bf \Theta}_i$ as the GRU input, since its hidden state evolves according to dynamics defined by input-dependent gating.
Additionally, following prior dynamical systems methods~\citep{yingzhen2018disentangled}, we further factorize ${\bf \Theta}_i$ into separate time-invariant and time-varying components to model nonstationary physiological behaviors.
To do this, we decompose ${\bf \Theta}_i$ into a time-invariant component $\boldsymbol{\theta}_i$, obtained via max pooling over time, and a time-varying component $\boldsymbol{\tilde{\theta}}_{i,t_k}$, obtained via a two-layer CNN over the latent dimension, and then concatenate the result at each time step to form $\boldsymbol{\Theta}_{i,t_k} = [\boldsymbol{\theta}_i, \boldsymbol{\tilde{\theta}}_{i,t_k}]$. 
Importantly, as shown in equation~\ref{eq:cross}, ${\bf \Theta}_{i, t_k}$ does not include parameters for $g_y$ and only includes parameters for $g_x$. This design choice is motivated by the SSM formulation, where the parameters of the observation function are not considered part of the underlying dynamics, reflecting the idea that how a process is measured is separate from the dynamics of the process itself. Therefore, the resulting embedding for ${\bf \Theta}_{i, t_k}$ includes only the pooled output of the dilated convolution.

To minimize Eq.~\ref{eq:cross}, $f_{\rm sys}$ must extract shared information from ${\bf Y}_i$ that can explain the evolution of ${\bf Y}_j$. This means encoding only the underlying system variables ${\bf \Theta}^{(s)}$, since these factors remain invariant across samples from the same system.
Furthermore, because $\mathcal{P}$ involves optimizing over randomly chosen pairs, $f_{\rm sys}$ cannot rely on sample-specific factors from ${\bf Y}_i$, since this information will not be present in ${\bf Y}_j$ and may increase $\mathcal{L}_{\rm Cross}$ if present. Thus, the $\mathcal{L}_{\rm Cross}$ loss encourages the encoder to discard non-shared factors and focus only on shared system information.

\begin{figure*}[htb]
  \centering    
  \includegraphics[width=\textwidth]{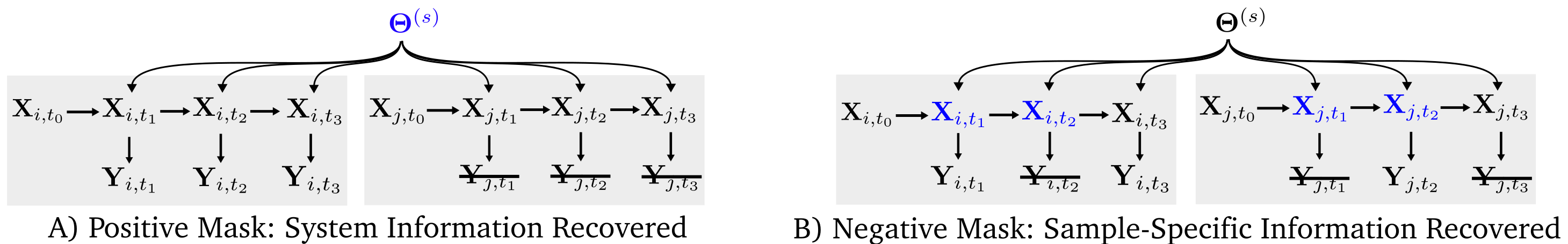}
  \caption{
  We illustrate how different masking strategies can change the source of information recovered in our data-generating process for sample pairs $({\bf Y}_i, {\bf Y}_j)$ where $i,j \in \mathcal{I}_s$.
  \sout{${\bf Y}$} marks an observable that is removed from the input and used as a reconstruction target.
  \textcolor{blue}{{\bf Blue}} highlights $\mathcal{C}$, representing the information that is recovered during pretraining.
  Theorem~\ref{theorem:locating_system_info} predicts that $\mathcal{C}=\{{\bf \Theta}^{(s)}\}$ only when information from one sample is fully removed.
  \hl{Gray boxes} group latent variables that are specific to each time-series sample.
  }
  \label{fig:graphical_model}
\end{figure*}

\noindent {\bf Cross-Reconstruction with PULSE Pseudo-Pairs.}
Unfortunately, Eq.~\ref{eq:cross} requires access to system labels $\mathcal{I}_s$ from independent time-series samples, which we do not have access to in an unlabeled dataset.
Therefore, we cannot directly sample from $\mathcal{P}$. 
Instead, we propose to construct approximately independent pseudo-pairs $(\mathbf{Y}_i, \widetilde{\mathbf{Y}}_i)$ via \emph{system-preserving augmentations} $\widetilde{\mathbf{Y}}_i\sim \mathcal{T}({{\bf Y}_i})$.
We define $\mathcal{T}$ as a family of transformations that maintain the underlying system's identity while providing the data variation necessary to discard sample-specific noise. For physiological time-series, preserving the system identity is crucial to ensure augmentations do not alter the clinical meaning of a signal. In our work, we use random crops for $\mathcal{T}$, since they preserve the time-series dynamics while introducing variation in the initial condition ${\bf X}_{i, t_0}$. This aligns with our generative model in Fig.~\ref{fig:data_graphical_model} and encodes the assumption that the underlying physiological state is often independent of the specific moment recording begins. By requiring ${\bf \Theta}_i$ to support accurate reconstruction across random crops, the system encoder must learn to recover system dynamics that is invariant to the specific starting state. 
This leads to the PULSE objective,
\vspace{-1mm}
\begin{align}
    \begin{split}
        &\mathcal{L}_{\rm PULSE}({\bf Y}_i)= \\ &\mathbb{E}_{{\bf \widetilde{Y}}_{i, t_{k}}\sim\mathcal{T}} \left[ \sum_{k=1}^w \| {\bf \widetilde{Y}}_{i, t_{k}} - g_y(g_x({\bf X}_{i,t_{k-1}}, {\bf \Theta}_{i,t_k})\|^2 \right],
    \end{split}
\end{align}
where ${\bf X}_{i,t_0} = [f_{\rm init}({\bf Y}_i)]_{t_0}$ and ${\bf \Theta}_i = f_{\rm sys}({\bf Y}_i)$. Here, the system-preserving augmentation ${\bf \widetilde{Y}}_{i, t_{k}}\sim\mathcal{T}({\bf Y}_i)$ is a random crop of length $w$, where the initial time of the cropped window is $t_0 \sim \text{Uniform}(1, T-w)$. 
We find that using multiple $\widetilde{\mathbf{Y}}_i$'s to estimate the expectation can improve performance, and in our experiments we use up to four.

\noindent {\bf Regularizing Time-Varying System Variables.}
Since both encoders observe the same ${\bf Y}_i$, and the system representation includes $\boldsymbol{\theta}_{i,t_k}$ derived from that same input, it is possible that the encoder learns a trivial representation of the dynamics by simply copying local signal values into these $\boldsymbol{\theta}_{i,t_k}$.
We mitigate this scenario with the following two strategies that limit the expressivity of $\boldsymbol{\theta}_{i,t_k}$.
First, we reduce the dimensionality of the $\boldsymbol{\theta}_{i,t_k}$ to a single dimension similar to prior works~\citep{yingzhen2018disentangled}. This ensures that $\boldsymbol{\theta}_{i,t_k}$ alone does not have sufficient capacity to represent the full diversity of initial conditions in the data.
Second, we limit how quickly $\boldsymbol{\theta}_{i,t_k}$ can change over time by sharing max pooled values across the time dimension between consecutive timesteps.

\begin{table*}[htb]
\caption{
Our synthetic results confirm Theorem~\ref{theorem:locating_system_info}'s predictions: The positive oracle improves classification accuracy over PULSE pretraining without labels, while the negative oracle reduces performance relative to the positive model and, at $\sigma=5$, performance even falls below the practical PULSE algorithm. PULSE is the most effective practical algorithm, being the best at distinguishing parameter changes across all $\sigma$. \textbf{Black} is the best practical algorithm (pretrained without labels), while \textcolor{blue}{\textbf{blue}} is the best oracle (pretrained with labels) when it exceeds all practical methods. Fig.~\ref{fig:graphical_model} illustrates the masking strategy for positive and negative oracle. 
}
\small
\centering
\begin{tabular}{c | r c c c c c c c | c c}
 \toprule
\multirow{ 2}{*}{Noise $\sigma$}  & & & & & & & & & \multicolumn{2}{c}{PULSE Oracle} \\
  \cline{10-11} 
  & SimCLR & TS2Vec & REBAR & PatchTST & TimeMAE & LFADS & DSVAE & PULSE & Positive & Negative \\
  \hline & \\[-1.5ex]
$0$ & 93.08  & 98.68  & 98.90  & 77.59  & 96.29  & 99.06  & 98.56  & {\bf 99.58}  & 99.29  & 98.86 \\
$1$ & 83.10  & 93.07  & 93.36  & 50.36  & 93.42  & 93.02  & 90.84  & {\bf 96.09}  & \blue{\bf 97.26}  & 96.66 \\
$3$ & 70.05  & 79.78  & 79.36  & 39.88  & 75.08  & 79.03  & 76.70  & {\bf 83.42}  & \blue{\bf 89.00}  & 84.62 \\
$5$ & 62.29  & 73.67  & 72.37  & 37.82  & 66.63  & 71.33  & 69.90  & {\bf 77.34}  & \blue{\bf 82.65} & 76.90 \\
 \bottomrule
\end{tabular}
\vspace{-1.5\baselineskip}
\label{table:analysis}
\end{table*}

\vspace{-1mm}
\subsection{Provable Recovery of System Information}

We now provide a theoretical analysis of our framework and identify conditions where cross-reconstruction provably recovers system information. Our strategy builds on prior work showing that MAE pretraining implicitly recovers information from the minimal set of latent variables shared\footnote{Our minimal set of shared latent variables $\mathcal{C}$ is defined in Theorem 1 of~\citep{kong2023understanding} as ${\bf c}$.} $\mathcal{C}$ between masked and unmasked regions in a hierarchical data-generating process~\citep{kong2023understanding}. By viewing cross-reconstruction as an MAE task under a specific masking strategy, we can extend this theory to characterize the type of information recovered in our time-series generative model under different masks.

Cross-reconstruction can be viewed as an MAE task by treating the pair $({\bf Y}_i,{\bf Y}_j)$ as a single joint input with a pair of masking variables $({\bf m}_i,{\bf m}_j)$, where each ${\bf m}_i \in \{0,1\}^{W \times M}$ indicates for every element $(w,m)$ whether it is observed ($m_{i,w,m}=1$) or masked ($m_{i,w,m}=0$).
In this view, Eq.~\ref{eq:cross} corresponds to setting ${\bf m}_{i,1:W,1:M}=1$ to retain ${\bf Y}_i$ as input, while fully masking the other sample ${\bf m}_{j,1:W,1:M}=0$, so that ${\bf Y}_j$ is removed and can serve as the reconstruction target.
The effect of this masking strategy on the type of information recovered can then be characterized given our generative model in Eq.~\ref{eq:dataset-level-generative-process}.
To make this precise, we outline our assumptions (further detailed in Appendix~\ref{appendix:proof}) on the generative process.

\begin{restatable}{assumption}{assumptionone}
\label{assumption:data-generate}
(Data-Generating Process).
The process in Equation~\eqref{eq:dataset-level-generative-process} satisfies the following conditions: (i) the fully factorized generative model is a 
directed acyclic graph (DAG)
; and (ii) each function $g_k$  is invertible.
\end{restatable}

\vspace{-1mm}
Given this data-generating process and the cross-reconstruction masking scheme described above, we present our theory, which identify the $\mathcal{C}$ between masked and unmasked regions. This $\mathcal{C}$ corresponds to the information that is implicitly recovered during MAE pretraining~\citep{kong2023understanding}.

\begin{restatable}{theorem}{systemrecovery}
\label{theorem:locating_system_info}
Given two time series ${\bf Y}_i$ and ${\bf Y}_j$ independently sampled from the same system (i.e., ${\bf \Theta}_i = {\bf \Theta}_j={\bf \Theta}^{(s)}$) under the generative process defined by Eq.~\ref{eq:dataset-level-generative-process} and Assumption~\ref{assumption:data-generate}, the minimal set of latent variables shared is the system parameters ${\bf \Theta}^{(s)}$ if and only if all observables from one series is fully masked (i.e., ${\bf m}_{i,1:W,1:M} = 0$ and ${\bf m}_{j,1:W,1:M} = 1$).
\end{restatable}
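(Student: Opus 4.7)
The plan is to reduce the statement to an application of the MAE identifiability theorem of \citep{kong2023understanding}, which says that, under a hierarchical generative DAG, the representation recovered by a masked reconstruction objective captures the minimal set $\mathcal{C}$ of latent variables that are simultaneously ancestors of the observed and the masked observables. With that reduction in hand, the theorem becomes a pair of graph-theoretic computations on the DAG specified by Eq.~\ref{eq:dataset-level-generative-process}: for the ``if'' direction, evaluate $\mathcal{C}$ under the prescribed masking; for the ``only if'' direction, show that any deviation enlarges $\mathcal{C}$ beyond $\{\mathbf{\Theta}^{(s)}\}$.

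For sufficiency, I would fix $\mathbf{m}_{i,1:W,1:M}=0$ and $\mathbf{m}_{j,1:W,1:M}=1$, so that the observed region is exactly $\mathbf{Y}_j$ and the masked target is exactly $\mathbf{Y}_i$. Reading the ancestors off Eq.~\ref{eq:dataset-level-generative-process}, the latents upstream of $\mathbf{Y}_j$ are $\{\mathbf{X}_{j,t_0},\dots,\mathbf{X}_{j,t_W},\mathbf{\Theta}^{(s)}\}$ together with the noise terms $\{\epsilon_{j,t_k},\nu_{j,t_k}\}_k$, and the latents upstream of $\mathbf{Y}_i$ are the analogous collection indexed by $i$. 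Because the joint factorization makes all sample-specific latents mutually independent and $i\neq j$, the intersection of the two ancestor sets is exactly $\{\mathbf{\Theta}^{(s)}\}$. Invertibility of $g_y$ and $g_x$ (Assumption~\ref{assumption:data-generate}(ii)) ensures no information collapses at intermediate nodes, so the minimal shared set coincides with $\{\mathbf{\Theta}^{(s)}\}$.

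For necessity, I would argue by contrapositive: if the masking is not of the prescribed form, some series (say $\mathbf{Y}_i$) has both observed and masked entries. Invertibility of $g_y$ and $g_x$ (Assumption~\ref{assumption:data-generate}(ii)) then implies that the sample-specific latent $\mathbf{X}_{i,t_0}$ can be recovered from any observed $\mathbf{Y}_{i,t_k}$ and, by the forward state recursion, is also an ancestor of every masked $\mathbf{Y}_{i,t_{k'}}$, so $\mathbf{X}_{i,t_0} \in \mathcal{C}$ and the set is strictly larger than $\{\mathbf{\Theta}^{(s)}\}$. Degenerate configurations with everything observed or everything masked are excluded because the MAE conditioning or target set is empty. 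The symmetric swap $\mathbf{m}_i=1,\mathbf{m}_j=0$ produces the same $\mathcal{C}=\{\mathbf{\Theta}^{(s)}\}$, which I read as the intended meaning of ``one series fully masked.'' The hardest step is carefully ruling out the noise latents $\{\epsilon_{i,t_k},\nu_{i,t_k}\}$ as alternative shared ancestors that could spoil the strict inclusion; since each appears as a parent of a single observable and is independent of everything else, it cannot supply cross-position sharing, so the only mechanism for enlarging $\mathcal{C}$ under partial intra-series masking is through $\mathbf{X}_{i,t_0}$ via the dynamics chain, which closes the argument.
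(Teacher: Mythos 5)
Your overall strategy—reducing the statement to Theorem 1 of \citep{kong2023understanding} and then doing graph computations on the DAG of Eq.~\ref{eq:dataset-level-generative-process}—is the same as the paper's, and your sufficiency direction is essentially the paper's: with one series fully masked and the other fully observed, the only latent ancestor common to the two series is $\mathbf{\Theta}^{(s)}$, the candidate set is a singleton so nothing is pruned, and $\mathcal{C}=\{\mathbf{\Theta}^{(s)}\}$. (Minor point: invertibility in Assumption~\ref{assumption:data-generate} is inherited from the identifiability result of \citep{kong2023understanding}; it does no work in this ancestor computation.)

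The necessity direction, however, has a genuine gap: you implicitly identify $\mathcal{C}$ with the set of latents that are ancestors of both the observed and the masked observables, and conclude that under partial intra-series masking $\mathcal{C}$ contains $\mathbf{X}_{i,t_0}$ \emph{in addition to} $\mathbf{\Theta}^{(s)}$, i.e.\ is ``strictly larger.'' But $\mathcal{C}$ in \citep{kong2023understanding} is the \emph{minimal} shared set, produced by a selection stage followed by a pruning stage that removes any ancestor whose directed paths to the unmasked observables pass through another member of $\mathcal{C}$. ``Shared ancestor'' therefore does not imply membership in $\mathcal{C}$: when series $i$ is partially masked, the boundary state variables (e.g.\ $\mathbf{X}_{i,t_1}$ for a left-edge mask, $\mathbf{X}_{i,t_0-1}$ for a right-edge mask, both for a middle mask, as in Eq.~\ref{eq:shared_latent_cases}) lie on every directed path from $\mathbf{\Theta}^{(s)}$—and from $\mathbf{X}_{i,t_0}$—to the unmasked region, so both are pruned and the minimal set consists solely of boundary states; it does not contain $\mathbf{\Theta}^{(s)}$ at all, contrary to your ``strictly larger'' claim. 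Your target conclusion $\mathcal{C}\neq\{\mathbf{\Theta}^{(s)}\}$ is still true, but to reach it under the operative definition you must argue as the paper does: $\{\mathbf{\Theta}^{(s)}\}$ cannot be the minimal blocking set because the within-series state chain connects observed and masked time points without passing through $\mathbf{\Theta}^{(s)}$; the paper makes this precise by running Algorithm 1 on the three subsequence-mask cases and taking unions over subsequence masks for arbitrary partial masks. Relatedly, the step ``invertibility implies $\mathbf{X}_{i,t_0}$ can be recovered from any observed $\mathbf{Y}_{i,t_k}$'' is not the right mechanism—the forward maps involve the noise terms $\nu_{i,t_k}$, and in any case what matters for Algorithm 1 is ancestry in the DAG, not recoverability.
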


\vspace{-2mm}
A proof is provided in Appendix~\ref{appendix:proof}. Theorem~\ref{theorem:locating_system_info} states that system information is recovered during a masked reconstruction task when an entire time series is removed from the input pair, as this masking scheme uniquely ensures that the $\mathcal{C}$ connecting the masked and unmasked regions contains the system parameters.
Importantly, this reconstruction task with whole-sample masking corresponds exactly to the pretraining objective $\mathcal{L}_{\rm Cross}$.
Moreover, as shown in Fig.~\ref{fig:graphical_model}, this theory predicts that when a time series contains both masked and unmasked regions, $\mathcal{C}$ necessarily includes the state variables ${\bf X}$, causing the recovered information to confound sample-specific and system information.
Since $\mathcal{L}_{\rm PULSE}$ can be viewed as an approximation of $\mathcal{L}_{\rm Cross}$ that uses pseudo-pairs to simulate independent samples, this theory offers an explanation for how $\mathcal{L}_{\rm PULSE}$ can recovers system information. 
Appendix~\ref{app:representation-space} provides additional discussion on the properties of independent samples that PULSE approximates.

\section{Synthetic Dynamical Systems Experiments}

\noindent {\bf Set up. } We investigate Theorem~\ref{theorem:locating_system_info} in a setting where Assumption 1 may not completely hold. 
Specifically, we consider a synthetic dynamical systems experiment, where
$g_x$ cannot be practically inverted due to system chaos.
The goal of this task is to learn a representation space that can distinguish between parameter settings while remaining robust to increasing levels of dynamical system noise. This task is motivated by real-world scenarios in which parameter changes in the underlying system may correspond to shifts in physiological state, and effective representations must reliably capture these shifts. Time series data are generated from three stochastic differential equations (Lorenz, Thomas, and Hindmarsh-Rose) across a grid of parameters in bifurcation regions and noise levels $\sigma=\{0,1,3,5\}$. 
Datasets are then constructed by randomly selecting five parameter settings to form a five-class classification problem, with trials split into 70:15:15 for train, validation, and test sets and subsequently segmented into windows where $W=100$.
For each $\sigma$, linear probe results are averaged over 10 random dataset samples and model initializations per system. Full dataset details are provided in Appendix~\ref{appendix:synthetic_data}.

\noindent {\bf Results. } Table~\ref{table:analysis} shows that PULSE consistently achieves the highest classification accuracy among all practical baselines (described in Section~\ref{section:experiments} and Appendix~\ref{appendix:baselines}), even as $\sigma$ increases. 
This demonstrates that PULSE pretraining on pseudo-pairs $({\bf Y}_i, {\bf \widetilde{Y}}_i)$ is more robust to noise and can extract class-discriminative features that reveal changes in system parameters.
Furthermore, we validate Theorem~\ref{theorem:locating_system_info} by considering a PULSE Oracle where label information is used to identify true $({\bf Y}_i, {\bf Y}_j)$ pairs to construct the positive and negative set ups illustrated in Figure~\ref{fig:graphical_model}.
Specifically, the positive model leverages labels to select pairs in $\mathcal{L}_{\rm Cross}$, whereas the negative model applies random temporal masking to each pair before inputting both into $f_{\rm init}$ and $f_{\rm sys}$. 
According to Theorem~\ref{theorem:locating_system_info}, the negative oracle captures sample-specific information that is uninformative for classifying system parameters, so the positive oracle is expected to consistently outperform it. Our results in Table~\ref{table:analysis} confirm this prediction with positive oracle outperforming all practical algorithms and negative oracle consistently underperforming the positive oracle.

\vspace{-1mm}
\begin{table*}[htb]
\caption{Linear Probe Classification. PULSE achieves the best results on PPG, ECG, and EEG. Note that while HAR scores for PULSE are lower than SOTA in this experiment, this representation leads to improved performance in Tables~\ref{table:semi_supervised} and~\ref{table:transfer}.
}
\tiny
\centering
\resizebox{\textwidth}{!}{
\begin{tabular}{c r c  c c c c c c c }
 \toprule
  \multicolumn{2}{c}{Metric}  & SimCLR & TS2Vec & REBAR & PatchTST & TimeMAE & LFADS & DSVAE & PULSE \\
  \hline & \\[-1.5ex]
   \multirow{3}{*}{\rotatebox[origin=c]{90}{HAR}}     

& Accuracy $\uparrow$    & 94.65 & 93.24 & {\bf 95.35} & 83.04 & 92.25 & 93.55 & 93.55 & 93.27 \\
& AUROC $\uparrow$  & 99.38 & 99.31 &  {\bf 99.65} & 97.44 & 99.14 & 99.49 & 99.36 & 99.42  \\
& AUPRC $\uparrow$   & 97.63 & 97.66 & {\bf 98.91} & 89.90 & 97.05 & 98.29 & 97.69 & 98.10 \\
   \hline & \\[-1.5ex]
   \multirow{3}{*}{\rotatebox[origin=c]{90}{PPG}}     
& Accuracy $\uparrow$     & 34.48 & 40.23  & 41.38  & 59.78 & 61.35 & 52.81 & 58.65 & {\bf 64.27} \\
& AUROC $\uparrow$   & 61.19 & 64.28 &  69.77 & 71.08 & 78.08 & 71.10 & 76.78 & {\bf 80.29} \\
& AUPRC $\uparrow$   & 36.08 & 39.59 & 44.57 & 52.91 & 56.74 & 49.59 & 55.38 & {\bf 59.89} \\

    \hline & \\[-1.5ex]
    \multirow{3}{*}{\rotatebox[origin=c]{90}{ECG}}    
& Accuracy $\uparrow$  & 69.92 & 76.12 & 81.54 & 64.40 & 69.80  & 61.84 & 70.42 & {\bf 87.41}\\
& AUROC $\uparrow$  & 82.54 & 86.56 & 91.46 & 70.96 & 76.61  & 71.69 & 82.88 & {\bf 94.93} \\
& AUPRC $\uparrow$ & 80.63 & 85.16 & 89.85 & 68.16 & 76.62  & 69.21 & 81.31 & {\bf 94.75} \\

    \hline & \\[-1.5ex]
    \multirow{3}{*}{\rotatebox[origin=c]{90}{EEG}}   
& Accuracy $\uparrow$ & 66.38 & 83.76 & 83.71 & 80.62 & 83.83 & 82.43 & 84.25 & {\bf 85.56} \\
& AUROC $\uparrow$ & 85.45 & 94.99 & 95.08 & 93.55 & 95.09 & 94.49 & 95.42 & {\bf 96.17} \\
& AUPRC $\uparrow$ & 50.95 & 70.22 & 70.77 & 67.37 & 73.22 & 68.55 & 72.25 & {\bf 73.82} \\

 \bottomrule
\end{tabular}
}
\vspace{-5mm}
\label{table:linear_probe}
\end{table*}

\section{Real Physiological Data Experiments}
\label{section:experiments}

We evaluate PULSE against representative SSL baselines across several real-world datasets and downstream tasks.

{\bf Data. } We consider 4 commonly used physiological time-series datasets from distinct sensor domains, each consisting of long trials with time-varying classification labels. We use Human Activity Recognition (HAR) \citep{reyes2015smartphone}, where human activity is estimated from accelerometer and gyroscope signals; PPG \citep{schmidt2018introducing}, where optical blood volume signals are used to estimate stress levels; ECG \citep{moody1983new}, where the heart's electrical activity is used to detect rhythm abnormalities; and EEG \citep{kemp2000analysis}, where the brain's electrical activity is used to estimate sleep stages. A detailed description of these datasets is provided in Appendix \ref{appendix:real_data}.

{\bf Baselines. } We benchmark performance against a representative set of SSL pretraining approaches, including three CL methods: SimCLR \citep{chen2020simple}, TS2Vec \citep{yue2022ts2vec}, and REBAR \citep{xu2024rebar}, two SVAE models: LFADS \citep{sedler2023lfads, sussillo2016lfads} and DSVAE \citep{yingzhen2018disentangled}, and two MAE approaches: TimeMAE \citep{cheng2026timemae} and PatchTST~\citep{nie2022time}. 
To evaluate the pretraining objective, we use the same dilated convolution encoder architecture~\citep{yue2022ts2vec} across all CL and SVAE experiments, ensuring that performance differences reflect the quality of the pretraining rather than differences in architectures. For MAE baselines, we retain the original transformer encoders, since changing it reduces performance.
Appendix \ref{appendix:baselines} provides additional details on baselines.

\begin{table*}[htb]
\caption{Semi-supervised classification accuracy for $1\%$ and $5\%$ of labels averaged over 25 random seeds. Higher score is better. PULSE outperforms all SSL baselines and most supervised baselines.
}
\tiny
\centering
\resizebox{\textwidth}{!}{\begin{tabular}{c r c | c c c c c c c c}
 \toprule
  \multicolumn{2}{c}{Dataset} & Supervised & SimCLR & TS2Vec & REBAR & PatchTST & TimeMAE & LFADS & DSVAE & PULSE \\
  \hline & \\[-1.5ex]
   \multirow{4}{*}{1 \%} 
& HAR & 78.39 & 71.74 & 80.57 & 81.10 & 33.27 & 80.79 & 80.97 & 79.94 & {\bf 84.74} \\
& ECG &  45.46 & 63.83 & 62.77 & 67.56 & 57.68 & 65.15 & 57.34 & 67.60 & {\bf 84.77} \\
& PPG & 34.38 & 30.34 & 31.98  & 32.33 & 41.74 & 40.45 & 40.22 & 41.28 & {\bf 42.97} \\
& EEG & 77.76 & 56.72 & 77.39 & 77.19 & 63.45 & 70.55 & 74.19 & 78.40 & {\bf 80.69} \\
                            
   \hline & \\[-1.5ex]
   \multirow{4}{*}{5 \%} 
& HAR &  92.34 & 85.01 & 91.76 & 91.04 & 54.79 & 91.55 & 91.48 & 90.72 & {\bf 93.14} \\
& ECG &  69.20 & 65.00 & 63.97 & 70.12 & 60.73 & 68.73  & 60.04 & 67.66 & {\bf 84.23} \\
& PPG & 42.47 & 30.62 & 33.13 & 38.25 & 49.48 & 49.53 & 45.35 & 51.15 & {\bf 53.39} \\
& EEG &  {\bf 84.90} & 61.98 & 77.09 & 76.75 & 73.48 & 77.50 & 75.19 & 78.17 & {\bf 80.45} \\
 \bottomrule
\end{tabular}
}
\vspace{-2\baselineskip}
\label{table:semi_supervised}
\end{table*}

\subsection{Linear Probe Evaluation}
\label{subsection:linear_probe}
To assess the ability of pretraining to learn class-discriminative features, we train a linear probe (logistic regression) on the frozen embeddings from each pretrained model to predict the ground truth physiological class labels from each dataset. 
We measure performance by reporting Accuracy, AUROC, and AUPRC averaged over 5 random model initializations over a single pre-defined training-val-test split (details in Appendix~\ref{appendix:real_data} and full cross-validation experiments in Appendix~\ref{appendix:cross_validation}).
Table~\ref{table:linear_probe} shows that PULSE pretraining achieves strong linear probe performance across all four datasets, performing competitively on HAR and achieving the highest overall scores on PPG, ECG, and EEG. 
Notably, PULSE substantially outperforms LFADS and DSVAE on ECG and PPG, highlighting that explicitly removing noise provides an clear advantage over SVAE objectives that do not distinguish between noise and signal. Moreover, the improvements over SimCLR, TS2Vec, and REBAR show that CL’s sensitivity to false positive pairs may limit its effectiveness across diverse sensor domains. 
Finally, the performance increase over PatchTST and TimeMAE demonstrates that designing a generative task to explicitly extract system information produces representations that better distinguish physiological states than those learned by standard masked modeling, which does not leverage this structure.

\vspace{-1pt}
\subsection{Semi Supervised Evaluation}
Next, we evaluate the label efficiency of the pretrained representations using a semi-supervised classification task on the pretrained frozen embeddings. For each pretrained model, we train a linear probe on 1\% and 5\% of the ground truth labels and apply Laplace smoothing so that all downstream classes are represented. Reported accuracies are averaged over five random label subsets for each of the five model initializations from Section~\ref{subsection:linear_probe}, for a total of 25 seeds. We also include a supervised baseline to estimate performance achievable without pretraining. 
Table~\ref{table:semi_supervised} shows that PULSE pretraining consistently outperforms the baselines across all four sensor domains, achieving the highest scores among SSL methods across all datasets. 
Interestingly, PULSE's HAR representation is more label-efficient, achieving strong performance with very few labels despite lower linear probe scores on the full labeled dataset, suggesting that it captures key class-discriminative features more efficiently.
PULSE also outperforms most supervised baselines, highlighting the advantage of its pretrained representations in limited data scenarios.

\begin{table}[tbp]
\caption{In-Domain Transfer Learning.}
\small
\centering
\begin{tabular}{c c c c c}
 \toprule
 & \multicolumn{2}{c}{EEG $\rightarrow$ Epilepsy}
 & \multicolumn{2}{c}{HAR $\rightarrow$ Gesture} \\
 \cmidrule(lr){2-3} \cmidrule(lr){4-5}
   & ACC  & AUROC   & ACC  & AUROC \\
  \hline & \\[-1.5ex]
  SimCLR & 93.52 & 97.52 & 78.83 & 93.80 \\
  TS2Vec & 93.95 & 95.87 & 77.67 & 95.45 \\
  REBAR  & 95.27 & 98.33 & 78.17 & 95.54 \\
  PatchTST  & 95.03 & 98.04 & 77.00 & 95.21 \\
  LFADS & 94.71 & 98.01 & 78.50 & 95.42 \\
  DSVAE & 94.97 & 98.17 & 78.00 & 95.34 \\
  PULSE & {\bf 95.82} & {\bf 98.51} & {\bf 83.67} & {\bf 97.20} \\
 \bottomrule
\end{tabular}
\label{table:transfer}
\vspace{-5mm}
\end{table}

\vspace{-3mm}
\subsection{Transfer Learning Evaluation}
\vspace{-1mm}

We investigate in-domain transfer learning for classification in two scenarios from \cite{zhang2022self}. 
This task is motivated by real-world applications where we want to transfer knowledge between datasets collected from similar sensors.
In the first scenario, a model is pretrained on EEG \citep{kemp2000analysis} and fine-tuned on the Epilepsy dataset \citep{andrzejak2001indications}. In the second, a model is pretrained on HAR \citep{reyes2015smartphone} and fine-tuned on Gesture \citep{liu2009uwave}. Our setup follows \cite{zhang2022self}, where we attach a 2-layer MLP head and fine-tune for 40 epochs using the Adam optimizer with a learning rate of 0.0003. A detailed description of the fine-tuning datasets is provided in Appendix~\ref{appendix:real_data}. Table~\ref{table:transfer} reports accuracy and AUROC averaged over five random seeds for both model initialization and fine-tuning. In both scenarios, PULSE consistently achieves the highest transfer performance, with a substantial gain on the HAR-to-Gesture task. This demonstrates that pretraining designed to explicitly prioritize system information can produce features that transfer effectively to related downstream tasks.

\vspace{-2mm}
\subsection{Ablation Study}
\vspace{-1mm}
\begin{table}
\caption{
Ablation study. Relative effect ($\Delta$) of removing model components on linear probe accuracy. 
}%
\resizebox{\columnwidth}{!}{
\begin{tabular}{l r c c c  c}
 \toprule
   & HAR  & ECG  & PPG  & EEG & Avg. \\
  \midrule
PULSE   & {\bf 93.27} & {\bf 87.41} & {\bf 64.27} &  {\bf 85.56} & -\\
\midrule
w/o TV-Params $\tilde{\theta}_{i,t_k}$ & -1.73 & -6.83  & -9.22 & -12.7 & -7.62 \\
Shared $f_{\rm sys}$ and $f_{\rm init}$ & -1.02 & -1.75  & -1.58  & -0.54 & -1.22 \\
Direct Recon. (w/o Random Crop) & -1.16 & -7.73 & -15.96 & -0.81 & -6.42 \\
Cross-Recon. w Random Pairs & -2.42 & -22.99  & -6.96  & -6.35 & -9.68 \\
 \bottomrule
\end{tabular}
}
\label{table:ablation} 
\vspace{-5mm}
\end{table}

Table~\ref{table:ablation} shows the effect of ablating various PULSE components on the linear probe accuracy across all datasets, reported as $\Delta={\rm Acc}_{\rm Ablated}-{\rm Acc}_{\rm PULSE}$.
In \emph{w/o TV-Params $\boldsymbol{\tilde\theta}_{i,t_k}$}, we remove time-varying parameters and retain only time-invariant ones, i.e., ${\bf \Theta}_i = \boldsymbol{\theta}_i$. 
This results in an average accuracy drop of 7.6\%, underscoring the importance of modeling non-stationary dynamics in physiological time series.
In \emph{Shared $f_{\rm sys}$ and $f_{\rm init}$}, we estimate ${\bf X}_{i,t_k}$ from the output of $f_{\rm sys}$ such that ${\bf X}_{i,t_k}=\left[f_{\rm init}(f_{\rm sys}({\bf Y}_i))\right]_{t_k}$, rather than using two separate encoders. This results in a 1.22\% drop, suggesting that explicitly separating transferrable and non-transferrable information can improve time-series representation learning. 
In \emph{w/o random crops}, we fix $t_0 = 1$ for all samples during initial-condition inference. This effectively reduces the method to direct reconstruction, as the input–output pairs become fixed rather than randomly sampled. This leads to a 6.4\% drop in average accuracy, showing that training the system representation with system-preserving augmentations is important and that our pseudo-pair strategy can effectively construct similar time-series pairs without in unlabeled datasets.
In \emph{Cross Reconstruction with Random Pairs}, we form input–output pairs by randomly sampling segments without regard for the underlying dynamics of each time-series sample. This ablation has the largest effect, decreasing the average performance by -9.68\%. This emphasizes that effective cross-reconstruction requires identifying similar time-series and further highlighting the effectiveness of our pseudo-pair strategy in achieving this.

\vspace{-3.5mm}
\section{Conclusion}
\vspace{-1mm}
In this work, we introduced PULSE, an SSL pretraining framework that simultaneously preserves dynamical systems information while selectively filtering out sample-specific noise. Through our novel cross-reconstruction strategy, PULSE combines the structure of VAE models with the transferability of modern SSL methods.
We hope this work inspires future research into better formulations of the underlying generative model to obtain improved representations of more complex physiological phenomena.

\section*{Impact Statement} This paper presents work whose goal is to advance the field of machine learning. There are many potential societal consequences of our work, none of which we feel must be specifically highlighted here.

\section*{Acknowledgements}
This work was funded by the James S. McDonnell Foundation (grant no. 22002039), the National Institutes of Health (grant nos. 2T32EB025816 and P41EB028242), the Julian T. Hightower Chair at Georgia Tech, and the National Science Foundation Graduate Research Fellowship (grant no. DGE-2039655). The authors are part of the Georgia Tech/Emory NIH/NIBIB Training Program in Computational Neural Engineering (T32EB025816). The content is solely the responsibility of the authors and does not necessarily represent the official views of the NIH or NSF.

\bibliographystyle{plainnat}
\bibliography{refs}

@misc{ren2018robust,
  title={Robust variational autoencoders: generating noise-free images from corrupted images},
  author={Ren, Huimin and Yue, Yun and Zhou, Chong and Paffenroth, Randy C and Li, Yanhua and Weiss, Matthew L},
  year={2018}
}

@inproceedings{alemi2018fixing,
  title={Fixing a broken ELBO},
  author={Alemi, Alexander and Poole, Ben and Fischer, Ian and Dillon, Joshua and Saurous, Rif A and Murphy, Kevin},
  booktitle={International conference on machine learning},
  pages={159--168},
  year={2018},
  organization={PMLR}
}

@article{girin2022dynamical,
  title={Dynamical variational autoencoders: A comprehensive review},
  author={Girin, Laurent and Leglaive, Simon and Bie, Xiaoyu and Diard, Julien and Hueber, Thomas and Alameda-Pineda, Xavier},
  journal={Foundations and Trends in Machine Learning},
  volume={15},
  number={1-2},
  pages={1--175},
  year={2022},
  publisher={Emerald Publishing Limited}
}

@article{dong2023simmtm,
  title={Simmtm: A simple pre-training framework for masked time-series modeling},
  author={Dong, Jiaxiang and Wu, Haixu and Zhang, Haoran and Zhang, Li and Wang, Jianmin and Long, Mingsheng},
  journal={Advances in Neural Information Processing Systems},
  volume={36},
  pages={29996--30025},
  year={2023}
}

@article{chen2023self,
  title={Self-supervised learning-based general laboratory progress pretrained model for cardiovascular event detection},
  author={Chen, Li-Chin and Hung, Kuo-Hsuan and Tseng, Yi-Ju and Wang, Hsin-Yao and Lu, Tse-Min and Huang, Wei-Chieh and Tsao, Yu},
  journal={IEEE journal of translational engineering in health and medicine},
  volume={12},
  pages={43--55},
  year={2023},
  publisher={IEEE}
}

@inproceedings{perochon2024time,
  title={Time-varying Representations of Longitudinal Biosignals using Self-supervised Learning},
  author={Perochon, Sam Jean and Abbaspourazad, Salar and Futoma, Joseph and Miller, Andrew and Sapiro, Guillermo},
  booktitle={NeurIPS 2024 Workshop: Self-Supervised Learning-Theory and Practice},
  year = {2024}
}

@article{li2025self,
  title={Self-Supervised Learning to Unveil Brain Dysfunctional Signatures in Brain Disorders: Methods and Applications},
  author={Li, Ying and Yang, Yanwu and Chen, Yuchu and Ye, Chenfei and Ma, Ting},
  journal={Health Data Science},
  volume={5},
  pages={0282},
  year={2025},
  publisher={AAAS}
}

@article{sussillo2016lfads,
  title={Lfads-latent factor analysis via dynamical systems},
  author={Sussillo, David and Jozefowicz, Rafal and Abbott, LF and Pandarinath, Chethan},
  journal={arXiv preprint arXiv:1608.06315},
  year={2016}
}

@article{reyes2015smartphone,
  title={Smartphone-based recognition of human activities and postural transitions data set},
  author={Reyes-Ortiz, JL and Anguita, Davide and Oneto, Luca and Parra, Xavier},
  journal={UCI Machine Learning Repository. School Inf. Comput. Sci. Univ. California at Irvine, Irvine, CA, USA, available online: http://archive. ics. uci. edu/ml/datasets/Smartphone-Based+ Recognition+ of+ Human+ Activities+ and+ Postural+ Transitions},
  year={2015}
}

@article{moody1983new,
  title={A new method for detecting atrial fibrillation using RR intervals},
  author={Moody, George},
  journal={Proc. Comput. Cardiol.},
  volume={10},
  pages={227--230},
  year={1983}
}

@inproceedings{schmidt2018introducing,
  title={Introducing wesad, a multimodal dataset for wearable stress and affect detection},
  author={Schmidt, Philip and Reiss, Attila and Duerichen, Robert and Marberger, Claus and Van Laerhoven, Kristof},
  booktitle={Proceedings of the 20th ACM international conference on multimodal interaction},
  pages={400--408},
  year={2018}
}

@article{kemp2000analysis,
  title={Analysis of a sleep-dependent neuronal feedback loop: the slow-wave microcontinuity of the {EEG}},
  author={Kemp, Bob and Zwinderman, Aeilko H and Tuk, Bert and Kamphuisen, Hilbert AC and Oberye, Josefien JL},
  journal={IEEE Transactions on Biomedical Engineering},
  volume={47},
  number={9},
  pages={1185--1194},
  year={2000},
  publisher={IEEE}
}

@article{andrzejak2001indications,
  title={Indications of nonlinear deterministic and finite-dimensional structures in time series of brain electrical activity: Dependence on recording region and brain state},
  author={Andrzejak, Ralph G and Lehnertz, Klaus and Mormann, Florian and Rieke, Christoph and David, Peter and Elger, Christian E},
  journal={Physical Review E},
  volume={64},
  number={6},
  pages={061907},
  year={2001},
  publisher={APS}
}

@article{wagner2020ptb,
  title={PTB-XL, a large publicly available electrocardiography dataset},
  author={Wagner, Patrick and Strodthoff, Nils and Bousseljot, Ralf-Dieter and Kreiseler, Dieter and Lunze, Fatima I and Samek, Wojciech and Schaeffter, Tobias},
  journal={Scientific data},
  volume={7},
  number={1},
  pages={1--15},
  year={2020},
  publisher={Nature Publishing Group}
}

@inproceedings{chen2020simple,
  title={A simple framework for contrastive learning of visual representations},
  author={Chen, Ting and Kornblith, Simon and Norouzi, Mohammad and Hinton, Geoffrey},
  booktitle={International conference on machine learning},
  pages={1597--1607},
  year={2020},
  organization={PmLR}
}

@inproceedings{yue2022ts2vec,
  title={Ts2vec: Towards universal representation of time series},
  author={Yue, Zhihan and Wang, Yujing and Duan, Juanyong and Yang, Tianmeng and Huang, Congrui and Tong, Yunhai and Xu, Bixiong},
  booktitle={Proceedings of the AAAI conference on artificial intelligence},
  volume={36},
  pages={8980--8987},
  year={2022}
}

@inproceedings{xu2024rebar,
  title={{REBAR}: Retrieval-based reconstruction for time-series contrastive learning},
  author={Xu, Maxwell and Moreno, Alexander and Wei, Hui and Marlin, Benjamin M and Rehg, James},
  booktitle={International Conference on Learning Representations},
  volume={2024},
  pages={15440--15464},
  year={2024}
}

@article{sedler2023lfads,
  title={lfads-torch: A modular and extensible implementation of latent factor analysis via dynamical systems},
  author={Sedler, Andrew R and Pandarinath, Chethan},
  journal={arXiv preprint arXiv:2309.01230},
  year={2023}
}

@article{chambon2018deep,
  title={A deep learning architecture for temporal sleep stage classification using multivariate and multimodal time series},
  author={Chambon, Stanislas and Galtier, Mathieu N and Arnal, Pierrick J and Wainrib, Gilles and Gramfort, Alexandre},
  journal={IEEE Transactions on Neural Systems and Rehabilitation Engineering},
  volume={26},
  number={4},
  pages={758--769},
  year={2018},
  publisher={IEEE}
}

@inproceedings{bianco2025heartbeat,
  title={Heartbeat Dynamics is an Output of a Non-Chaotic Dynamical System driven by Physiological Noise},
  author={Bianco, Martina and Scarciglia, Andrea and Bonanno, Claudio and Valenza, Gaetano},
  booktitle={2025 47th Annual International Conference of the IEEE Engineering in Medicine and Biology Society (EMBC)},
  pages={1--4},
  year={2025},
  organization={IEEE}
}

@article{mudrik2024decomposed,
  title={Decomposed linear dynamical systems (dlds) for learning the latent components of neural dynamics},
  author={Mudrik, Noga and Chen, Yenho and Yezerets, Eva and Rozell, Christopher J and Charles, Adam S},
  journal={Journal of Machine Learning Research},
  volume={25},
  number={59},
  pages={1--44},
  year={2024}
}

@article{chen2024probabilistic,
  title={Probabilistic decomposed linear dynamical systems for robust discovery of latent neural dynamics},
  author={Chen, Yenho and Mudrik, Noga and Johnsen, Kyle A and Alagapan, Sankaraleengam and Charles, Adam S and Rozell, Christopher},
  journal={Advances in Neural Information Processing Systems},
  volume={37},
  pages={104443--104470},
  year={2024}
}

@article{zhang2022self,
  title={Self-supervised contrastive pre-training for time series via time-frequency consistency},
  author={Zhang, Xiang and Zhao, Ziyuan and Tsiligkaridis, Theodoros and Zitnik, Marinka},
  journal={Advances in neural information processing systems},
  volume={35},
  pages={3988--4003},
  year={2022}
}

@inproceedings{cheng2026timemae,
  title={TimeMAE: Self-supervised representations of time series with decoupled masked autoencoders},
  author={Cheng, Mingyue and Tao, Xiaoyu and Liu, Zhiding and Liu, Qi and Zhang, Hao and Zhang, Rujiao and Chen, Enhong},
  booktitle={Proceedings of the Nineteenth ACM International Conference on Web Search and Data Mining},
  pages={498--508},
  year={2026}
}

@article{tonekaboni2021unsupervised,
  title={Unsupervised representation learning for time series with temporal neighborhood coding},
  author={Tonekaboni, Sana and Eytan, Danny and Goldenberg, Anna},
  journal={arXiv preprint arXiv:2106.00750},
  year={2021}
}

@inproceedings{kong2023understanding,
  title={Understanding masked image modeling via learning occlusion invariant feature},
  author={Kong, Xiangwen and Zhang, Xiangyu},
  booktitle={Proceedings of the IEEE/CVF Conference on Computer Vision and Pattern Recognition},
  pages={6241--6251},
  year={2023}
}

@article{von2021self,
  title={Self-supervised learning with data augmentations provably isolates content from style},
  author={Von K{\"u}gelgen, Julius and Sharma, Yash and Gresele, Luigi and Brendel, Wieland and Sch{\"o}lkopf, Bernhard and Besserve, Michel and Locatello, Francesco},
  journal={Advances in neural information processing systems},
  volume={34},
  pages={16451--16467},
  year={2021}
}

@inproceedings{yingzhen2018disentangled,
  title={Disentangled sequential autoencoder},
  author={Yingzhen, Li and Mandt, Stephan},
  booktitle={International Conference on Machine Learning},
  pages={5670--5679},
  year={2018},
  organization={PMLR}
}

@article{liu2024guidelines,
  title={Guidelines for Augmentation Selection in Contrastive Learning for Time Series Classification},
  author={Liu, Ziyu and Alavi, Azadeh and Li, Minyi and Zhang, Xiang},
  journal={arXiv preprint arXiv:2407.09336},
  year={2024}
}

@article{wang2024improved,
  title={An Improved Masking Strategy for Self-supervised Masked Reconstruction in Human Activity Recognition},
  author={Wang, Jinqiang and Cui, Wenxuan and Zhu, Tao and Ning, Huansheng and Liu, Zhenyu},
  journal={IEEE Sensors Journal},
  year={2024},
  publisher={IEEE}
}

@article{nie2022time,
  title={A time series is worth 64 words: Long-term forecasting with transformers},
  author={Nie, Yuqi and Nguyen, Nam H and Sinthong, Phanwadee and Kalagnanam, Jayant},
  journal={arXiv preprint arXiv:2211.14730},
  year={2022}
}

@article{liu2009uwave,
  title={uWave: Accelerometer-based personalized gesture recognition and its applications},
  author={Liu, Jiayang and Zhong, Lin and Wickramasuriya, Jehan and Vasudevan, Venu},
  journal={Pervasive and Mobile Computing},
  volume={5},
  number={6},
  pages={657--675},
  year={2009},
  publisher={Elsevier}
}

@article{tian2020makes,
  title={What makes for good views for contrastive learning?},
  author={Tian, Yonglong and Sun, Chen and Poole, Ben and Krishnan, Dilip and Schmid, Cordelia and Isola, Phillip},
  journal={Advances in neural information processing systems},
  volume={33},
  pages={6827--6839},
  year={2020}
}

@inproceedings{geenjaar2025citrus,
  title={CiTrus: Squeezing Extra Performance out of Low-data Bio-signal Transfer Learning},
  author={Geenjaar, Eloy and Lu, Lie},
  booktitle={Proceedings of the AAAI Conference on Artificial Intelligence},
  volume={39},
  pages={16781--16789},
  year={2025}
}

@inproceedings{locatello2020weakly,
  title={Weakly-supervised disentanglement without compromises},
  author={Locatello, Francesco and Poole, Ben and R{\"a}tsch, Gunnar and Sch{\"o}lkopf, Bernhard and Bachem, Olivier and Tschannen, Michael},
  booktitle={International conference on machine learning},
  pages={6348--6359},
  year={2020},
  organization={PMLR}
}

@article{gui2024survey,
  title={A survey on self-supervised learning: Algorithms, applications, and future trends},
  author={Gui, Jie and Chen, Tuo and Zhang, Jing and Cao, Qiong and Sun, Zhenan and Luo, Hao and Tao, Dacheng},
  journal={IEEE Transactions on Pattern Analysis and Machine Intelligence},
  volume={46},
  number={12},
  pages={9052--9071},
  year={2024},
  publisher={IEEE}
}

@inproceedings{li2024anatomask,
  title={Anatomask: Enhancing medical image segmentation with reconstruction-guided self-masking},
  author={Li, Yuheng and Luan, Tianyu and Wu, Yizhou and Pan, Shaoyan and Chen, Yenho and Yang, Xiaofeng},
  booktitle={European Conference on Computer Vision},
  pages={146--163},
  year={2024},
  organization={Springer}
}

@article{li2025medvista3d,
  title={MedVista3D: Vision-Language Modeling for Reducing Diagnostic Errors in 3D CT Disease Detection, Understanding and Reporting},
  author={Li, Yuheng and Chen, Yenho and Lai, Yuxiang and Zhong, Jike and Wildman, Vanessa and Yang, Xiaofeng},
  journal={arXiv preprint arXiv:2509.03800},
  year={2025}
}

@article{li2020scalable,
  title={Scalable gradients for stochastic differential equations},
  author={Li, Xuechen and Wong, Ting-Kam Leonard and Chen, Ricky T. Q. and Duvenaud, David},
  journal={International Conference on Artificial Intelligence and Statistics},
  year={2020}
}

@article{kidger2021neuralsde,
  title={Neural {SDE}s as {I}nfinite-{D}imensional {GAN}s},
  author={Kidger, Patrick and Foster, James and Li, Xuechen and Oberhauser, Harald and Lyons, Terry},
  journal={International Conference on Machine Learning},
  year={2021}
}

@inproceedings{kamiya2024koopman,
  title={Koopman Operator Based Dynamical Similarity Analysis for Data-driven Quantification of Distance between Dynamics},
  author={Kamiya, Shunsuke and Kitazono, Jun and Oizumi, Masafumi},
  booktitle={ICLR 2024 Workshop on Representational Alignment},
  year={2024}
}

@book{sparrow2012lorenz,
  title={The Lorenz equations: bifurcations, chaos, and strange attractors},
  author={Sparrow, Colin},
  volume={41},
  year={2012},
  publisher={Springer Science \& Business Media}
}

@article{thomas1999deterministic,
  title={Deterministic chaos seen in terms of feedback circuits: Analysis, synthesis," labyrinth chaos"},
  author={Thomas, Ren{\'e}},
  journal={International Journal of Bifurcation and Chaos},
  volume={9},
  number={10},
  pages={1889--1905},
  year={1999},
  publisher={World Scientific}
}

@article{billman2011heart,
  title={Heart rate variability--a historical perspective},
  author={Billman, George E},
  journal={Frontiers in physiology},
  volume={2},
  pages={86},
  year={2011},
  publisher={Frontiers Research Foundation}
}

@article{hindmarsh1984model,
  title={A model of neuronal bursting using three coupled first order differential equations},
  author={Hindmarsh, James L and Rose, RM},
  journal={Proceedings of the Royal society of London. Series B. Biological sciences},
  volume={221},
  number={1222},
  pages={87--102},
  year={1984},
  publisher={The Royal Society London}
}

@article{lorenz19631963,
  title={Deterministic nonperiodic flow},
  author={Lorenz, Edward N.},
  journal={J. Atmos. Sci},
  volume={20},
  number={1},
  pages={30--41},
  year={1963}
}

@article{kim2019burst,
  title={Burst synchronization in a scale-free neuronal network with inhibitory spike-timing-dependent plasticity},
  author={Kim, Sang-Yoon and Lim, Woochang},
  journal={Cognitive neurodynamics},
  volume={13},
  number={1},
  pages={53--73},
  year={2019},
  publisher={Springer}
}

@article{dhamala2004transitions,
  title={Transitions to synchrony in coupled bursting neurons},
  author={Dhamala, Mukeshwar and Jirsa, Viktor K and Ding, Mingzhou},
  journal={Physical Review Letters},
  volume={92},
  number={2},
  pages={028101},
  year={2004},
  publisher={APS}
}

@article{chen2023topology,
  title={Topology identification and dynamical pattern recognition for Hindmarsh--Rose neuron model via deterministic learning},
  author={Chen, Danfeng and Li, Junsheng and Zeng, Wei and He, Jun},
  journal={Cognitive Neurodynamics},
  volume={17},
  number={1},
  pages={203--220},
  year={2023},
  publisher={Springer}
}

@article{goufo2020perturbations,
  title={Perturbations of Hindmarsh-Rose neuron dynamics by fractional operators: Bifurcation, firing and chaotic bursts},
  author={Goufo, Emile Franc Doungmo and Khumalo, Melusi and Djomegni, Patrick M Tchepmo},
  journal={Discrete and Continuous Dynamical Systems-S},
  volume={13},
  number={3},
  pages={663--682},
  year={2020},
  publisher={Discrete and Continuous Dynamical Systems-S}
}

@article{sorin2024infinite,
  title={Infinite Bifurcations in Thomas system},
  author={Sorin, Idan and Tulchinsky, Michael},
  journal={arXiv preprint arXiv:2408.09525},
  year={2024}
}

@article{pandarinath2018inferring,
  title={Inferring single-trial neural population dynamics using sequential auto-encoders},
  author={Pandarinath, Chethan and O’Shea, Daniel J and Collins, Jasmine and Jozefowicz, Rafal and Stavisky, Sergey D and Kao, Jonathan C and Trautmann, Eric M and Kaufman, Matthew T and Ryu, Stephen I and Hochberg, Leigh R and others},
  journal={Nature methods},
  volume={15},
  number={10},
  pages={805--815},
  year={2018},
  publisher={Nature Publishing Group US New York}
}

@article{GramfortEtAl2013a,
  title = {{{MEG}} and {{EEG}} Data Analysis with {{MNE}}-{{Python}}},
  author = {Gramfort, Alexandre and Luessi, Martin and Larson, Eric and Engemann, Denis A. and Strohmeier, Daniel and Brodbeck, Christian and Goj, Roman and Jas, Mainak and Brooks, Teon and Parkkonen, Lauri and H{\"a}m{\"a}l{\"a}inen, Matti S.},
  year = {2013},
  volume = {7},
  pages = {1--13},
  doi = {10.3389/fnins.2013.00267},
  journal = {Frontiers in Neuroscience},
  number = {267}
}

@article{heo2021stress,
  title={Stress detection with single PPG sensor by orchestrating multiple denoising and peak-detecting methods},
  author={Heo, Seongsil and Kwon, Sunyoung and Lee, Jaekoo},
  journal={IEEE Access},
  volume={9},
  pages={47777--47785},
  year={2021},
  publisher={IEEE}
}

@inproceedings{smith2019super,
  title={Super-convergence: Very fast training of neural networks using large learning rates},
  author={Smith, Leslie N and Topin, Nicholay},
  booktitle={Artificial intelligence and machine learning for multi-domain operations applications},
  volume={11006},
  pages={369--386},
  year={2019},
  organization={SPIE}
}

@article{loshchilov2017decoupled,
  title={Decoupled weight decay regularization},
  author={Loshchilov, Ilya and Hutter, Frank},
  journal={arXiv preprint arXiv:1711.05101},
  year={2017}
}

@article{raschka2020machine,
  title={Machine Learning in Python: Main developments and technology trends in data science, machine learning, and artificial intelligence},
  author={Raschka, Sebastian and Patterson, Joshua and Nolet, Corey},
  journal={arXiv preprint arXiv:2002.04803},
  year={2020}
}

@article{roques2025personalized,
  title={Personalized Convolutional Dictionary Learning of Physiological Time Series},
  author={Roques, Axel and Gruffaz, Samuel and Kim, Kyurae and Oliviero-Durmus, Alain and Oudre, Laurent},
  journal={arXiv preprint arXiv:2503.07687},
  year={2025}
}

@inproceedings{xiong2019mixed,
  title={Mixed effects neural networks (menets) with applications to gaze estimation},
  author={Xiong, Yunyang and Kim, Hyunwoo J and Singh, Vikas},
  booktitle={Proceedings of the IEEE/CVF conference on computer vision and pattern recognition},
  pages={7743--7752},
  year={2019}
}

@article{zhang2020mixed,
  title={Mixed-effect time-varying network model and application in brain connectivity analysis},
  author={Zhang, Jingfei and Sun, Will Wei and Li, Lexin},
  journal={Journal of the American Statistical Association},
  volume={115},
  number={532},
  pages={2022--2036},
  year={2020},
  publisher={Taylor \& Francis}
}

\appendix

\section*{Appendix}
\section{Proof for Theorem \ref{theorem:locating_system_info}}
\label{appendix:proof}

\assumptionone*

\paragraph{Assumption Interpretation.} 
Part (i) ensures that our theory applies to complex systems with elaborate parameter factorizations, as long as they remain acyclic. Part (ii) guarantees that no information is lost during the generative process and is adopted from prior work on identifiable deep generative models~\citep{locatello2020weakly, von2021self}. 

\systemrecovery*

\begin{proof}
Our proof relies on Theorem 1 from~\cite{kong2023understanding}, which establishes that the minimal set of shared information ${\bf c}$ in a hierarchical DAG is unique and can be identified using Algorithm 1 presented in their work. 
We describe the generative model that is considered and summarize Algorithm 1 before applying it in the proof.

\noindent {\bf Two-sample Generative Model.} 
We consider the cross-reconstruction setting with a pair of time series $({\bf Y}_i, {\bf Y}_j)$ for $i,j \in \mathcal{I}_s$, generated from the same system such that ${\bf \Theta}_i = {\bf \Theta}_j = {\bf \Theta}^{(s)}$.
When considering only two samples, the joint distribution in Eq.~\ref{eq:dataset-level-generative-process} reduces to the following form, \looseness=-1
\begin{multline}
\label{eq:two-sample-generative}
p({\bf Y}_i, {\bf Y}_j, {\bf X}_i, {\bf X}_j, {\bf \Theta}^{(s)}) = 
p({\bf \Theta}^{(s)})\prod_{n=\{i, j\}} p({\bf X}_{n, t_0}) \\
\left[ \prod_{k=1}^{W} p({\bf Y}_{n,t_k} | {\bf X}_{n,t_k}) \right]
\left[ \prod_{k=2}^{W} p({\bf X}_{n,t_{k}} | {\bf X}_{n,t_{k-1}}, {\bf \Theta}^{(s)} ) \right].
\end{multline}
Moreover, we denote masked and unmasked observables respectively as,
\begin{align*}
&{\bf Y}_{\bf m}=\bigcup_{k=\{i,j\}} \{{\bf Y}_{k,t} \mid {\bf m}_{k,t}=0\} \qquad {\rm and } \\
&{\bf Y}_{\bf m_c}=\bigcup_{k=\{i,j\}} \{{\bf Y}_{k,t} \mid {\bf m}_{k,t}=1\}.
\end{align*}
In words, ${\bf Y}_{\bf m}$ and ${\bf Y}_{{\bf m}_c}$ are the sets of masked and unmasked time-points across both samples, respectively.

\noindent {\bf Algorithm 1 from~\cite{kong2023understanding}.} This approach uses a two-stage procedure for identifying the minimal set of latent variables $\mathcal{C}$ shared between masked and unmasked observables in a hierarchical graphical model. In the first stage, called the \emph{selection stage}, all latent variables that are ancestors of both ${\bf Y}_{\bf m}$ and ${\bf Y}_{\bf m_c}$ are located. This is done by collecting all parents of the masked observables and adding those that are also ancestors of the unmasked observables to $\mathcal{C}$. The resulting set $\mathcal{C}$ contains all latent variables shared between ${\bf Y}_{\bf m}$ and ${\bf Y}_{\bf m_c}$.
In the second stage, called the \emph{pruning stage}, each element of $\mathcal{C}$ is checked to ensure that no other element in $\mathcal{C}$ lies on the directed path between it and ${\bf Y}_{\bf m_c}$. If such a descendant exists, the ancestor node is removed from $\mathcal{C}$.
To summarize, given a hierarchical graphical model with masked and unmasked observables ${\bf Y}_{\bf m}$ and ${\bf Y}_{\bf m_c}$, this algorithm returns $\mathcal{C}$, the set of variables with lowest dimension that block all paths between ${\bf Y}_{\bf m}$ and ${\bf Y}_{\bf m_c}$. \looseness=-2

Now, we prove each direction of the if-and-only-if condition separately.

\noindent {\bf If a full sample is masked, then $\mathcal{C}={\bf \Theta}^{(s)}$.} 
Without loss of generality, we define full-sample mask as ${\bf m}_{i} = 0$ and ${\bf m}_{j} = 1$ for $i,j \in \mathcal{I}_s$. 
In this case, any element in $\mathcal{C}$ must be a common parent of both samples.
As illustrated in Fig.~\ref{fig:graphical_model}A, the only parent node that is shared between ${\bf Y}_i$ and ${\bf Y}_j$ is ${\bf \Theta}^{(s)}$, and thus the only set of variables shared between masked and unmasked regions is the system variables.
When Algorithm 1 from \citep{kong2023understanding} is applied to the graphical model in Eq.~\ref{eq:two-sample-generative} under the full-sample masking scheme, it recovers $\mathcal{C} = \{ {\bf \Theta}^{(s)} \}$. In the selection stage, ${\bf \Theta}^{(s)}$ is the only parent node connecting both samples ${\bf Y}_i$ and ${\bf Y}_j$. In the pruning stage, nothing is removed since $\mathcal{C}$ contains only a single element, implying no additional shared latent variables exist as children of ${\bf \Theta}^{(s)}$.
Therefore, under full-sample masking, the minimal set of shared latent variables includes only the shared system parameters ${\bf \Theta}^{(s)}$.

\noindent {\bf If ${\bf c}={\bf \Theta}^{(s)}$, then a full sample is masked.}
We prove the statement via its contrapositive: if a sample is not fully masked (i.e. contains both masked and unmasked observables), then the minimal set of shared latent variables cannot consist solely of the system parameters.

Intuitively, as shown in Fig.~\ref{fig:graphical_model}B, when a sample contains both masked and unmasked time-points, there is always a latent state variable ${\bf X}$ that serves as the parent node connecting ${\bf Y}_{\bf m}$ and ${\bf Y}_{{\bf m}_c}$.
To formalize this, we define a \emph{subsequence mask} as a consecutive region of masked observables, i.e., ${\bf m}_{i, t_0:t_1} = 0$, where $t_0, t_1 \in {1, \dots, T}$, $t_0 \le t_1$, and $t_1 - t_0 < T$ for a partial mask in a sample of length $T$.
Thus, $t_0 = t_1$ corresponds to masking a single time point at the index $t_0$.

There are three possible types of masked subsequence regions: (1) a mask bordering the left edge, or the beginning of the sample, (2) a mask bordering the right edge, or the end of the sample, and (3) a mask in the middle of the sample that is bordered on the left and right by unmasked regions. We apply Algorithm 1 in~\cite{kong2023understanding} to each case to determine what minimal set of shared latent variables are recovered. We use $\mathcal{C_{\rm subseq}}$ to denote the minimal set of shared latent variables between masked and unmasked regions induced by a subsequence mask.

\noindent \emph{Case 1 (Left Edge):} 
When the mask borders the left edge of sample ${\bf Y}_i$, the masked region satisfies $t_0 = 1$ and $t_1 < T$. During the selection stage, Algorithm 1 retrieves $\mathcal{C}_{\rm subseq} = \{{\bf \Theta}^{(s)}, {\bf X}_{i,t_1}\}$, since these nodes are ancestors of both masked and unmasked regions. In the pruning stage, ${\bf \Theta}^{(s)}$ is removed because ${\bf X}_{i,t_1}$ lies on the directed path between ${\bf \Theta}^{(s)}$ and the unmasked region. Therefore, $\mathcal{C}_{\rm subseq} = \{{\bf X}_{i,t_1}\}$. 
Intuitively, the latent variable ${\bf X}_{i,t_1}$ serves as the minimal parent connecting the masked and unmasked unmasked regions, ${\bf Y}_{i,1:t_1}$ and ${\bf Y}_{i,t_1:T}$ respectively.

\noindent \emph{Case 2 (Right Edge):} The right edge follows a similar analysis, where the masked region satisfies $t_0 > 1$ and $t_1 = T$. This results in $\mathcal{C}_{\rm subseq}  = \{{\bf \Theta}^{(s)}, {\bf X}_{i,t_0}\}$ after the selection stage, and $\mathcal{C}_{\rm subseq} = \{ {\bf X}_{i,t_0-1}\}$ after the pruning stage. Thus, the latent variable above the ${\bf X}_{i,t_0-1}$ serves as the lowest-level parent connecting unmasked and masked regions, ${\bf Y}_{i,1:t_0}$ and ${\bf Y}_{i,t_0:T}$ respectively.

\noindent \emph{Case 3 (Middle):} 
When $t_0 > 1$ and $t_1 < T$, the masked region is bordered on both the left and right by unmasked variables. 
During the selection stage, $\mathcal{C}_{\rm subseq}  = \{ {\bf X}_{i,t_0-1}, {\bf X}_{i,t_1}, {\bf \Theta}^{(s)} \}$, and after pruning, $\mathcal{C}_{\rm subseq}  = \{ {\bf X}_{i,t_0-1}, {\bf X}_{i,t_1} \}$, since the latent state variables lie on the directed paths from the system variables to the unmasked regions.
Therefore, there are two minimal parent nodes: one at the left boundary, ${\bf X}_{i,t_0-1}$, above the unmasked variable ${\bf Y}_{i,t_0-1}$, and one at the right boundary, ${\bf X}_{i,t_1}$, above the masked variable ${\bf Y}_{i,t_1}$.

Putting these results together, the minimal shared latent variables induced by a subsequence mask is given by,
\begin{align}
\label{eq:shared_latent_cases}
        \mathcal{C}_{\rm subseq} = 
\begin{dcases}
    \{\mathbf{X}_{i, t_1}\},                          & \text{if } t_0 = 1 \text{ and } t_1 < T, \\
    \{\mathbf{X}_{i, t_0-1}\},                          & \text{if } t_0 > 1 \text{ and } t_1 = T, \\ 
    \{\mathbf{X}_{i,t_0-1}, \mathbf{X}_{i,t_1} \}, & \text{if } t_0 > 1 \text{ and } t_1 < T.
\end{dcases}
\end{align}

We can extend this result to arbitrary masks, since any masking configuration over timepoints can be expressed as a union of subsequence masks, ${\bf m}_i = \bigcup_{k} {\bf m}_{i, t_0^{(k)}:t_1^{(k)}}$. To enforce partial masking, we require that ${\bf m}_{i,t} = 0$ for some $t$ and ${\bf m}_{i,t'} = 1$ for some $t' \neq t$.
Consequently, for arbitrary masks, the minimal shared latent set is the union over the minimal sets for each subsequence mask $\mathcal{C} = \bigcup_k \mathcal{C}_{\rm subseq}^{(k)}$. Since each $\mathcal{C}_{\rm subseq}$ contains only latent state variables, the union $\mathcal{C}$ does not include the system variables. Therefore, we have shown that under partial masking, the minimal set of shared latent variables does not include the system parameters.

To summarize, applying Algorithm 1 from~\cite{kong2023understanding} under a full-sample masking scheme yields $\mathcal{C} = \{{\bf \Theta}^{(s)}\}$, since it is the only variable shared between samples in our graphical model. In contrast, under partial masking, the minimal shared latent set includes only the latent state variables at the boundaries of the masked subsequences, and never the system parameters. This follows directly from our hierarchical structure, as the ${\bf X}$ variables always lie on the directed path from ${\bf \Theta}^{(s)}$ to ${\bf Y}_{\bf m_c}$.

Thus, under our two-sample generative model and applying Algorithm 1 from~\cite{kong2023understanding}, the system parameters appear as the minimal shared latent variables if and only if an entire sample is masked.

\end{proof}

\section{Synthetic Dataset Description}
\label{appendix:synthetic_data}
Synthetic time-series trials ${\bf y} \in \mathbb{R}^{T\times M}$ of length $T$ and $M$ measurement dimensions are generated by numerically integrating the Stratonovich SDE,
\begin{align}
    \label{eq:stranatovich_sde}
    {\rm d}{\bf y}_{t} = f({\bf y}_{t}) {\rm d}t + \tilde{\sigma} {\rm d}{\bf B}_{t},
\end{align}
with a fixed step size $\mathrm{d}t = 10^{-3}$, where $f(\cdot)$ is the dynamics function, ${\bf B}_t$ is multidimensional Brownian noise. 
To ensure that the noise levels are comparable across different systems, we set the diffusion scale as $\tilde{\sigma} = \sigma \, {\rm RMS}({\bf Y})$ where $\sigma$ is a dimensionless noise level and $\mathrm{RMS}({\bf Y})$ is the root-mean-square amplitude of the noiseless time-series and is estimated empirically through samples ${\bf Y}$ from the system. We consider the following noise levels $\sigma=\{0,1,3,5\}$ and integrate eq.~\ref{eq:stranatovich_sde} using
\texttt{torchsde}~\citep{li2020scalable, kidger2021neuralsde}.

We generate time series from three dynamical systems: Lorenz, Thomas, and Hindmarsh-Rose. These define strange attractor that produces bounded yet nontrivial dynamics. Importantly, for certain parameter regimes, these systems undergo bifurcations, where changes to parameters induce qualitative changes in the dynamics, thereby altering the statistical properties of the resulting time series.
We select systems whose behavior is sensitive to parameter changes, as physiological time series are also generated by nonlinear dynamical systems that are highly sensitive to changes in their underlying parameters.
For example, the bursting behavior of a neuron can be triggered or suppressed depending on the inputs it receives~\citep{hindmarsh1984model,kim2019burst}.
For each parameter setting and noise level, we generate 20 long time series with $T=10^5$ time-steps from random initial conditions ${\bf Y}_{i,0}\sim\mathcal{N}(0,I)$ and discard the first 200 steps as a burn-in period to ensure convergence to the attractor manifold. 
Below, we detail the parameters that we consider for each system.

\noindent {\bf Lorenz~\citep{lorenz19631963}.} 
Although originally derived from atmospheric convection, 
the Lorenz attractor serves as a conceptual tool for studying physiological dynamics. It exhibits bounded, irregular, and parameter-sensitive behavior, features shared by many biological systems such as heart rate variability~\citep{billman2011heart} and neural activity~\citep{chen2024probabilistic,mudrik2024decomposed,sussillo2016lfads}.
This is the canonical 3D nonlinear attractor used to study chaotic behavior in dynamical systems, with a state-space trajectory that resembles butterfly wings. 
For this system, $M=3$, and the dynamics are given by,
\begin{align}
    \frac{d{\bf y}}{dt}
    = 
    \begin{bmatrix}
        s (y_2-y_1) \\
        y_1(\rho -y_3)-y_2 \\
        y_1 y_2 - \beta y_3
    \end{bmatrix}
\end{align}
where ${\bf y}=[y_1, y_2, y_3]^\top$. Following prior work~\citep{sparrow2012lorenz, kamiya2024koopman}, we fix $\beta = 8/3$ and $s = 28$, and sweep $\rho$ across the following 10 values: $\{28, 41, 55, 69, 83, 96, 110, 124, 138, 152\}$. These values span a range of distinct chaotic regimes.

\noindent {\bf Thomas~\citep{thomas1999deterministic}.} 
The Thomas attractor is a 3D strange attractor that produces cylindrically symmetric time series in state space. For this system, $M=3$, and the dynamics are given by
\begin{align}
    \frac{d{\bf y}}{dt}
    = 
    \begin{bmatrix}
        \sin(y_2)-by_1 \\
        \sin(y_3)-by_2 \\
        \sin(y_1)-by_3
    \end{bmatrix}
\end{align}
where we sweep over $b\in\{0.025, 0.05, 0.075, 0.1, 0.125, 0.15, 0.175, 0.2, 0.225, 0.25\}$, corresponding to 10 equally spaced values from 0.025 to 0.25 in increments of 0.025. 
This range was chosen based on prior work~\citep{sorin2024infinite}, which demonstrates significant changes in system behavior between values of 0 and 0.33.

\noindent {\bf Hindmarsh-Rose~\citep{hindmarsh1984model}.} This is a 3D dynamical systems model of neuronal activity that exhibits bursting behavior. 
For this system, $M=3$, and the dynamics are given by,
\begin{align}
    \frac{d{\bf y}}{dt}
    = 
    \begin{bmatrix}
        y_2 -a y_1^3+b y_1^2-y_3+I \\
        c-dy_1^2-y_2 \\
        r[s(y_1-x_R)-y_3]
    \end{bmatrix},
\end{align}
and we sweep the external current parameter $I = \{1, 1.33, 1.66, 2, 2.33, 2.66, 3, 3.33, 3.66, 4\}$, corresponding to 9 equally spaced values from 1 to 4 in increments of 0.33. 
This range of parameters is chosen based on prior work~\citep{chen2023topology,dhamala2004transitions,goufo2020perturbations}, which shows that the system exhibits different spike–burst behaviors within this region.

Given these generated time-series trials, we construct a dataset for each system, parameter setting, and noise level by combining data from five randomly selected parameter values from each system's grid. 
By randomly selecting these parameter values, we ensure a range of task difficulty where more challenging datasets involve classifying parameter values that are close together, whereas easier datasets involve classifying parameter values that are farther apart. 
Importantly, we split each trial into 70:15:15 train, validation, and test splits, and then segment each trial into non-overlapping windows of size $W=100$.
For each system and noise level, we measure the classification accuracy averaged over ten random seeds, accounting for both dataset sampling (classification difficulty) and model initialization. The results reported in Table~\ref{table:analysis} are the average result for all three systems.

\section{Baseline Description}
\label{appendix:baselines}

For contrastive learning and SVAE baselines, we fix the encoder across different training objectives to control for the effects of encoder design and isolate the impact of the pretraining objective. Specifically, we adopt the time series encoder from TS2Vec \citep{yue2022ts2vec}, which consists of a 10-layer dilated convolutional network with an embedding size of 320.
This setup follows the experimental protocol of \citep{xu2024rebar} and report the best available performance from prior work or our own experiments. 
To obtain a representative embedding for each time window, we apply a global max pooling layer to aggregate features across the temporal dimension.
Below, we describe each baseline in more detail.

\noindent {\bf SimCLR~\citep{chen2020simple}.} SimCLR is a simple augmentation-based method that we adapt for time series data to evaluate the effectiveness of a purely augmentation-driven strategy. Three standard augmentations are applied, each with a 50\% probability: scaling, which multiplies the entire time series by a factor drawn from $U(0.5, 1.5)$; shifting, which offsets the time series by a random value in the range $[-\text{subsequence size},$\text{subsequence size}$]$ ; and jittering, which adds Gaussian noise with a standard deviation equal to 0.2 times the standard deviation of the dataset. 

\noindent {\bf TS2Vec~\citep{yue2022ts2vec}.} 
TS2Vec is a competitive time-series contrastive learning method for time series that learns time-stamp-level representations through augmentations. It employs hierarchical contrast, combining instance-level and temporal contrast across multiple resolutions to capture scale-invariant representations within augmented context views. In our experiments, we adopt the dilated convolutional encoder from this method and use the official implementation available at {\tt https://github.com/yuezhihan/ts2vec}.

\noindent {\bf REBAR~\citep{xu2024rebar}.} 
REBAR is a recent time-series contrastive learning method that defines positive pairs using a learned similarity measure. This is accomplished through a cross-attention mechanism that identifies class-specific motifs in one subsequence that can be used to reconstruct another. Subsequences that have the lowest reconstruction error are selected as positive pairs for contrastive learning. We use the implementation provided in the official repository: {\tt https://github.com/maxxu05/rebar}.

\noindent {\bf PatchTST~\citep{nie2022time}.}
PatchTST is a transformer model developed for forecasting that uses a patching mechanism, in which consecutive blocks of time points are processed together, and incorporates channel independence, processing each channel separately. It achieves strong performance in forecasting tasks and has been used as a backbone for extracting information about underlying physiological states from biosignals~\citep{geenjaar2025citrus}. While the original paper focuses primarily on supervised learning, the model can also be trained in a self-supervised fashion using a masked autoencoding (MAE) objective. In our experiments, we use the Hugging Face implementation from the official repository {\tt https://github.com/yuqinie98/PatchTST}
 and train PatchTST in SSL mode, using the CLS token as the summary representation for downstream evaluations.

\noindent {\bf TimeMAE~\citep{cheng2026timemae}.} 
TimeMAE explores the idea of block masking in a MAE framework, adapting it to time-series data. During training, random segments of the input time series are masked, and the unmasked portions are passed through an encoder to produce latent representations. The model also introduces a decoupled autoencoder, where masked and unmasked regions are encoded separately, allowing it to extract transferable information between these regions. A lightweight decoder then reconstructs the masked segments, and the model is trained to minimize the reconstruction error. We use the implementation from the official repository: {\tt https://github.com/Mingyue-Cheng/TimeMAE}.

\noindent {\bf LFADS~\citep{sussillo2016lfads, sedler2023lfads}.} 
Latent Factor Analysis via Dynamical Systems (LFADS) is a deep generative model designed to uncover low-dimensional latent dynamics underlying neural population activity~\citep{pandarinath2018inferring}. During inference, a bidirectional GRU produces an initial condition that serves as a summary representation of a time-series window. This initial condition is then evolved forward by a GRU with a global dynamics function to generate a latent time series, which is linearly projected back into data space to reconstruct the original time series. Although LFADS was originally developed for neural spiking data, its Poisson likelihood can be replaced with a Gaussian likelihood to adapt the framework for continuous-valued time series. In our experiments, we modify the official codebase ({\tt https://github.com/arsedler9/lfads-torch}) by replacing the BiGRU encoder with the same dilated convolution architecture used in previous baselines, and we find that this modification improves performance on downstream tasks.

\noindent {\bf DSVAE~\citep{yingzhen2018disentangled}.} 
Disentangled Sequential Variational Autoencoder (DSVAE)~\citep{yingzhen2018disentangled} is a generative model originally developed for sequential data (video and audio) and has not previously been applied to physiological signals. We include this baseline in our work since it's generative process resembles PULSE and our results show it is a competitive baseline when applied onto physiological time-series.
DSVAE uses a BiLSTM and MLP to infer static and dynamic latent variables, which are then used as initial conditions and inputs to an LSTM for generation. Importantly, DSVAE does not remove irrelevant information, as both system and sample-specific latents are observed and reconstructed jointly. In contrast, PULSE explicitly discards information about noise through the its objective that leverages pseudo-pairs.
We use the official implementation\footnote{\url{ https://github.com/yatindandi/Disentangled-Sequential-Autoencoder}}, replacing the encoder with dilated convolutions, which improves downstream performance.

\noindent {\bf PULSE.} This is our cross-reconstruction based method for physiological time-series SSL proposed in this paper. 

\section{Real Dataset Description}
\label{appendix:real_data}
For the linear probe experiments, we partition the data into training, validation, and test sets with a 70/15/15 inter-subject splits. Note that the total time of the labeled subsequences may not match the full length of the original recordings, as some portions of the data can be unlabeled. Extracted subsequences are non-overlapping.

\noindent {\bf HAR \citep{reyes2015smartphone}. } The HAR dataset consists of time series data recorded from 30 volunteers of ages 19-48 years with a smartphone (Samsung Galaxy S II) attached at the subjects waist. There are 59 samples of 5-minute long time series that are collected at 50 hz.
In our experiment, we use as input 6-channels ( 3-axis linear accelerometer and 3-axis angular velocity). 
6-channels recordings Raw accelerometer and gyroscopic sensor data.
Subsequences are 2.56 seconds long (128 time steps) which matches the labels from the original work. collected from smartphones. 
6-class classification task with 4,600 subsequences with the following activity class label names and proportions: walking (17.7 \%), walking upstairs (7.6 \%), walking downstairs (9.1 \%), sitting (18.2 \%), standing (20.1 \%), and laying (20.1\%)

\noindent {\bf  ECG \citep{moody1983new}. } We use data from the MIT-BIH Atrial Fibrillation dataset~\citep{moody1983new}. Since no subsequence length was defined in the original work, we adopt 10-second segments, following both prior analyses of this dataset~\citep{tonekaboni2021unsupervised, xu2024rebar} and the convention used in ECG classification studies more broadly~\citep{wagner2020ptb}. 
In total, 76,590 distinct subsequences are extracted from 23 recordings, each lasting approximately 9.25 hours and sampled at 250 Hz with two channels. 
To further improve computational efficiency, each subsequence is downsampled by a factor of five and produces subsequences with 500 time-steps. 
Of these, 76,567 subsequences are labeled, with 41.7\% corresponding to atrial fibrillation and 58.3\% to normal rhythm.

\noindent {\bf  PPG \citep{schmidt2018introducing}. } 
This dataset is constructed from the WESAD dataset. There are 15 recordings each of approximately 87 minutes in duration, corresponding to 334,080 samples collected at 64 Hz from a single channel. From these recordings we extract 1,305 distinct 1-minute subsequences, consistent with the segmentation strategy used in the original work. We improve computational efficiency by downsampling each subsequence by a factor of four, so that each subsequence has 960 time steps. Of these, 666 subsequences are annotated with class labels: baseline (42.7\%), stress (24.0\%), amusement (12.4\%), and meditation (20.9\%). All signals are denoised following the procedure described in \cite{heo2021stress}.

\noindent {\bf  EEG \citep{kemp2000analysis}. }  Sleep-EDF \citep{kemp2000analysis} contains 39 whole-night electroencephalography (EEG) recordings collected using sleep cassettes from 20 healthy subjects. Following the preprocessing protocol of \citep{chambon2018deep}, we use two EEG leads (Fpz-Cz and Pz-Oz) to evaluate pretraining, sampled at 100 Hz and segmented into non-overlapping 30-second intervals (3,000 time steps). This yields a total of 35,424 samples with the following class distribution: Wake (22.9\%), N1 (8.89\%), N2 (51.30\%), N3 (9.92\%), and REM (7.00\%). The data preprocessing is accomplished with the MNE package~\citep{GramfortEtAl2013a}. For transfer learning, we follow prior experimental conditions and consider a pretrained model that uses a single EEG lead (Fpz-Cz), segmented into windows of length 200.

The following data are publicly available through the links provided in the repository at \url{https://github.com/mims-harvard/TFC-pretraining}.

\noindent {\bf Epilepsy. \citep{andrzejak2001indications}} 
The Epilepsy dataset contains 500 single-channel EEG recordings, each lasting 23.6 seconds. To minimize subject-specific bias, the recordings are divided into 11,500 one-second segments and randomly shuffled, with signals sampled at 178 Hz. The dataset has five labels that capture different conditions or recording locations: eyes open, eyes closed, EEG from healthy regions, EEG from tumor regions, and seizure activity. For our experiments, we reduce the task to binary classification by grouping the first four categories as the negative class and using seizure episodes as the positive class. Fine-tuning is performed on a small labeled subset of 60 samples (30 per class), with 20 additional samples (10 per class) used for validation. The model that achieves the best validation performance is then evaluated on the remaining 11,420 test samples.

\noindent {\bf Gesture. \citep{liu2009uwave}} 
The Gesture dataset captures eight distinct hand movements recorded via accelerometers, with each gesture defined by the path of motion. The gestures include swiping left, right, up, or down; waving in clockwise or counterclockwise circles; tracing a square; and drawing a right arrow. Each gesture is assigned a unique classification label. While the original study reports 4,480 recordings, only 440 samples were available from the UCR Database, yielding a balanced dataset with 55 samples per class. The sampling frequency is not specified in the original paper but is assumed to be 100 Hz. Despite its modest size, the dataset provides enough samples for fine-tuning experiments.

\begin{table*}[h]
\caption{
Linear probe classification results using 5-fold cross validation splits. Each split uses a different random seed for model initialization, and we report the standard deviation in parentheses. PULSE achieves the best performance across all datasets, and the results closely match those obtained with a single split.
} %
\small
\centering
\resizebox{\textwidth}{!}{
\begin{tabular}{c r c  c c c c c c c c }
 \toprule
  \multicolumn{2}{c}{Metric}  & SimCLR & TS2Vec & REBAR & PatchTST & TimeMAE & LFADS & DSVAE & SimMTM & PULSE \\
  \hline & \\[-1.5ex]
   \multirow{3}{*}{\rotatebox[origin=c]{90}{HAR}}     

& Accuracy $\uparrow$ & 89.47 (3.47) & 92.56 (2.96) & 94.13 (2.06) & 80.48 (1.68) & 94.16 (1.37) & 93.27 (2.44) & 92.00 (3.34) & 95.19 (1.81) & {\bf 95.30 (1.72)} \\
& AUROC $\uparrow$ & 98.31 (1.23) & 99.34 (0.36) & 99.35 (0.34) & 97.30 (0.35) & 99.59 (0.11) & 99.42 (0.33) & 99.27 (0.37) & {\bf 99.73 (0.14)} & {\bf 99.73 (0.15)} \\
& AUPRC $\uparrow$ & 93.69 (3.99) & 97.51 (1.31) & 97.56 (1.09) & 89.21 (1.44) & 98.37 (0.54) & 97.83 (1.19) & 97.06 (1.45) & 98.24 (0.66) & {\bf 98.99 (0.58)} \\
   \hline & \\[-1.5ex]
   \multirow{3}{*}{\rotatebox[origin=c]{90}{PPG}}     
& Accuracy $\uparrow$ & 48.03 (4.16) & 57.58 (2.68) & 60.67 (3.08) & 57.71 (3.12) & 61.92 (3.64) & 55.34 (3.31) & 60.96 (2.43) & 53.93 (4.42) & {\bf 63.87 (2.50)} \\
& AUROC $\uparrow$ & 65.22 (3.06) & 76.39 (2.15) & 78.04 (2.73) & 70.93 (1.44) & 77.61 (0.86) & 73.48 (1.79) & 77.91 (3.17) & 75.74 (2.70) & {\bf 79.77 (2.77)} \\
& AUPRC $\uparrow$ & 41.89 (2.09) & 53.02 (3.91) & 57.94 (3.77) & 53.06 (1.94) & 58.27 (1.15) & 50.52 (3.40) & 57.07 (3.56) & 51.79 (3.75) & {\bf 58.43 (2.69)} \\
    \hline & \\[-1.5ex]
    \multirow{3}{*}{\rotatebox[origin=c]{90}{ECG}}    
& Accuracy $\uparrow$ & 73.14 (7.13) & 77.76 (4.96) & 80.00 (2.53) & 68.02 (9.18) & 67.96 (11.65) & 64.60 (3.89) & 77.93 (3.15) & 82.99 (1.72) & {\bf 88.01 (1.12)} \\
& AUROC $\uparrow$ & 75.49 (8.30) & 85.96 (7.36) & 87.19 (2.19) & 72.92 (10.98) & 87.84 (5.40) & 73.60 (8.27) & 83.30 (7.70) & 94.35 (2.31) & {\bf 96.95 (0.73)} \\
& AUPRC $\uparrow$ & 71.45 (7.31) & 80.03 (10.30) & 82.50 (0.70) & 68.99 (6.43) & 84.16 (7.81) & 67.66 (7.03) & 79.13 (9.09) & 90.49 (1.22) & {\bf 95.32 (0.29)} \\

    \hline & \\[-1.5ex]
    \multirow{3}{*}{\rotatebox[origin=c]{90}{EEG}}   
& Accuracy $\uparrow$ & 69.08 (2.99) & 82.15 (1.81) & 82.27 (1.74) & 80.83 (0.36) & 80.07 (0.59) & 82.37 (1.32) & 82.74 (1.75) & 80.53 (1.82) & {\bf 84.25 (1.63)} \\
& AUROC $\uparrow$ & 88.01 (1.65) & 95.01 (0.80) & 95.05 (0.73) & 94.99 (0.15) & 94.30 (0.52) & 95.00 (0.62) & 95.17 (0.60) & 94.65 (0.74) & {\bf 96.33 (0.68)} \\
& AUPRC $\uparrow$ & 57.59 (2.10) & 75.06 (1.81) & 74.73 (1.52) & 71.62 (0.12) & 74.52 (0.83) & 74.36 (1.01) & 75.00 (1.05) & 75.17 (0.94) & {\bf 77.62 (1.43)} \\

 \bottomrule
\end{tabular}
}
\label{table:linear_probe_cv}
\end{table*}

\begin{table*}[h]
\caption{
Semi-supervised classification accuracy for $1\%$ and $5\%$ of labels. Results are averaged over 5-fold cross-validation splits each with a random model initalization. Higher score is better. PULSE achieves the best performance in all settings, except for PPG at 1\%, where it is within the margin of error of the top score.
}
\centering
\resizebox{\textwidth}{!}{
\begin{tabular}{c r c | c c c c c c c c c}
 \toprule
  \multicolumn{2}{c}{Dataset} & Supervised & SimCLR & TS2Vec & REBAR & PatchTST & TimeMAE & LFADS & DSVAE & SimMTM & PULSE \\
  \hline & \\[-1.5ex]
   \multirow{4}{*}{1 \%} 
& HAR &  81.37 (2.88) & 72.65 (1.70) & 77.91 (1.84) & 78.75 (1.74) & 33.98 (1.96) & 81.02 (1.78) & 78.54 (1.89) & 78.28 (1.62) & 79.92 (1.62) & {\bf 84.16 (2.14)} \\
& ECG & 69.20 (2.57)  & 70.03 (4.50) & 74.83 (3.97) & 71.95 (4.52) & 58.31 (5.95) & 57.94 (4.79) & 66.30 (4.50) & 71.54 (6.20) & 82.02 (2.01) & {\bf 85.74 (1.30)} \\
& PPG &  43.55 (5.75) & 37.68 (4.24) & 42.83 (5.76) & 43.33 (6.05) & 42.92 (4.48) & {\bf 44.40 (4.83)} & 39.19 (3.99) & 43.24 (5.42) & 39.51 (4.51) & 43.51 (5.81) \\
& EEG &  72.32 (1.84) & 60.27 (2.04) & 75.34 (0.74) & 74.24 (1.59) & 68.32 (1.57) & 69.10 (1.43) & 71.80 (1.81) & 73.98 (1.67) & 67.41 (1.69) & {\bf 77.95 (1.26)} \\
   \hline & \\[-1.5ex]
   \multirow{4}{*}{5 \%} 
& HAR &  91.65 (1.35) & 82.99 (1.24) & 88.80 (1.34) & 88.64 (1.50) & 54.60 (1.89) & 91.10 (1.42) & 89.45 (1.32) & 87.38 (1.48) & 89.95 (1.33) & {\bf 92.85 (1.30)} \\
& ECG &  80.71 (3.58) & 69.64 (3.31) & 74.94 (3.46) & 71.36 (2.73) & 62.11 (6.48) & 62.37 (5.92) & 64.79 (2.57) & 73.36 (3.70) & 80.07 (2.20) & {\bf 84.95 (1.43)} \\
& PPG &   54.02 (2.24) & 42.25 (1.92) & 50.79 (2.58) & 52.74 (3.20) & 53.30 (2.43) & 45.66 (3.95) & 42.61 (2.84) & 53.75 (2.65) & 46.16 (2.86) & {\bf 54.38 (3.61)} \\
& EEG & 73.86 (2.16)  & 63.38 (1.83) & 74.42 (0.50) & 73.13 (1.42) & 74.00 (1.44) & 74.76 (0.81) & 72.21 (1.75) & 72.56 (1.45) & 69.42 (1.43) & {\bf 77.87 (1.14)} \\
 \bottomrule
\end{tabular}
}
\label{table:semi_supervised_cv}
\end{table*}

\section{Cross Validation Experiments}
\label{appendix:cross_validation}
Here, we further validate our findings with additional cross validation experiments that assess generalization across within-dataset variation. Specifically, we perform 5-fold cross validation, with each fold trained using a different random initialization seed. Note that we also include SimMTM~\citep{dong2023simmtm}, a recent competitive contrastive MAE hybrid model, as a baseline.
Table~\ref{table:linear_probe_cv} shows that PULSE achieves the best performance across all datasets, with results consistent with the single-split scores reported in Table~\ref{table:linear_probe}. Notably, PULSE becomes the top performer on HAR when averaged over cross validation splits, even though it was not in the single-split setting.
These results indicate that PULSE is robust to within-dataset variability and performs consistently across datasets with diverse signal characteristics. 

The semi-supervised learning results show a similar trend. In Table~\ref{table:semi_supervised_cv}, PULSE consistently outperforms all baseline methods, with the single exception of PPG at 1\% labels, where it slightly underperforms TimeMAE. However, this difference is not significant since the accuracies are well within one standard deviation of each other. Importantly, these cross-validated results closely match the single split results in Table~\ref{table:semi_supervised}. This consistency across evaluation settings indicates that PULSE’s representations are stable with respect to how the data is partitioned, which further strengthens our conclusion that PULSE can learn label-efficient representations that generalize across physiological datasets with very different signal and dataset characteristics.

\section{Visualization of PULSE Representations}
We visualize the embeddings produced by PULSE using t-SNE. As shown in Figure~\ref{fig:tsne}, PULSE learns a representation space that effectively separates the different label classes, indicating that it captures features corresponding to clinically relevant states. 
In the HAR dataset, PULSE separates the representations into three major super-clusters. One cluster corresponds to laying, another merges standing and sitting activities, reflecting their similar motion patterns, and the third encompasses all walking-related activities, including walking on a flat surface, walking upstairs, and walking downstairs. 
In ECG, there is a clear separation between A-fib from normal rhythms. Interestingly, while normal states form multiple distinct clusters, A-fib appears as a single dominant cluster, suggesting that there are many variations of normal activity, but only one typical patterns for A-fib.
Clustering in PPG is less distinct, likely due to the dataset’s challenging nature and the presence of motion artifacts. Nevertheless, PULSE is able to separate stress, amusement, and meditation states, although it struggles to distinguish the baseline state.
In EEG, PULSE clearly separates wake from sleep states. Furthermore, within sleep, the representation captures a continuous progression from N1 through REM states.
\begin{figure*}[h]
  \centering    \includegraphics[width=\textwidth]{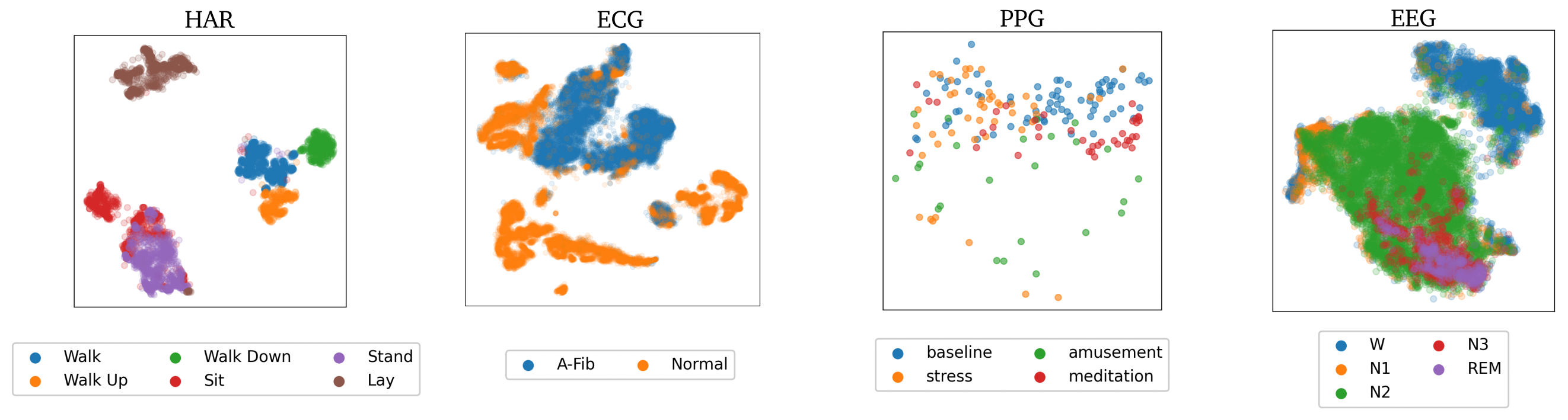}
  \caption{t-SNE visualizations of PULSE representations.}
  \label{fig:tsne}
\end{figure*}

\section{Description of Model Training and Hyperparameters}

\noindent {\bf Pretraining.} All models are implemented in PyTorch 2.8. We pretrain using the AdamW optimizer~\cite{loshchilov2017decoupled} with a One Cycle learning rate scheduler~\citep{smith2019super}. Regularization includes gradient clipping (fixed at 5 for all models) and weight decay, which is selected through hyperparameter tuning. Models are trained for the full number of epochs, and the checkpoint with the lowest validation loss is used for evaluation.

\noindent {\bf Linear Probe.} 
We evaluate the linear probe by encoding time-series samples with a frozen model and training a Logistic Regression classifier on the standardized embeddings. We use the cuML implementation~\citep{raschka2020machine}, a GPU-accelerated drop-in replacement for scikit-learn. Following prior work, all hyperparameters of the linear probe are fixed, including a regularization coefficient of 1, ensuring that performance differences reflect the quality of the learned representations rather than the probe itself.

\noindent {\bf Fine-tuning.} For the fine-tuning experiments, we follow the preprocessing and dataset-splitting procedure of~\cite{zhang2022self}. We initialize the encoder with the best pretraining checkpoint and attach a two layer fully connected classification head with a hidden dimension of 64 and an output dimension matching the target dataset. Fine tuning is performed for 40 epochs using a batch size of 30, a learning rate of 0.0003, and a weight decay of $10^{-5}$.

\noindent {\bf Hyperparameters. } We tune hyperparameters for each method using a random search with a budget of 30 trials over a grid of reasonable values. For trial, we uniformly sample within each of the following grids and select the best setting according to the best validation performance. Training hyperparameters are swept over epochs $[50, 100, 200]$, learning rates $[0.001, 0.0005, 0.0001]$, and weight decay $[10^{-3}, 10^{-4}, 10^{-5}]$. Model-specific hyperparameters are varied to capture differences in architecture and training objectives. 

For PULSE, the model-specific hyperparameter sweep includes the initial condition encoder hyperparameters, including convolution kernel size $[3, 5, 11]$, dilation $[1, 2]$, and hidden dimensionality $[64, 128]$. For the GRU decoder, we sweep across the number of layers $[2, 3, 4]$ and hidden dimensionality $[64, 128]$. Finally, for the pseudo-pair construction, we vary the number of samples $[1, 2, 3]$. Hyperparameter grids for other baselines are included in the code repository.

\section{Potential Mechanisms for Learning Well-Structured Representations}
\label{app:representation-space}
PULSE produces a structured representation space without the explicit mechanisms used by contrastive learning (CL). We discuss why this structure may emerge and how the two approaches differ.

\noindent {\bf How cross-reconstruction shapes the learned representation.} In principle, our pseudo-pair strategy could learn separate system parameters for every window. In practice, this doesn't happen and we observe a well-structured latent space in which windows with similar dynamics cluster together (fig. \ref{fig:tsne}). We hypothesize that the pretraining objective implicitly encourages smoothness. To cross-reconstruct a randomly sampled target, it may be more stable to optimize for a well-organized space that lets the decoder reuse shared components; scattering similar signals arbitrarily would place the parameters needed for reconstruction in a distant neighborhood, yielding high error. This would encourage the model to relate shared dynamics across windows rather than learn window-specific dynamics, consistent with the transfer across windows and to held-out subjects that we observe.

\noindent {\bf Distinctions from contrastive learning.} Both approaches yield structured representations but rely on different mechanisms. While CL explicitly minimizes the distance between positive pairs, PULSE groups similar samples implicitly, since nearby signals let the decoder reuse shared dynamical components. While CL explicitly separates negative samples, PULSE does not require negative samples, as mapping two distinct systems to the same representation leaves the decoder unable to determine which to reconstruct, again yielding high error. Finally, PULSE is generative, preserving dynamical information through cross-reconstruction, whereas CL is discriminative and may discard these dynamics when they are unnecessary for distinguishing samples.

\noindent {\bf Toward independent pairs.} Although the pseudo-pair strategy approximates properties of independent pairs, our synthetic experiments suggest truly independent pairs yield further gains. Identifying or constructing such pairs from raw, unlabeled data is a promising direction for future work.

\section{Method Limitations} 
An important limitation of our work is the gap between the practical PULSE algorithm and the theoretically ideal cross-reconstruction setting. While Theorem~\ref{theorem:locating_system_info} suggests using fully independent samples to learn system representations invariant to sample-specific factors, PULSE relies on pseudo-pairs from the same sample. This may limit the model’s ability to capture system dynamics, particularly when sample-specific variability is large. Future work could improve the identification of independent samples to better align practice with theory.
Another limitation is the structure of our assumed generative model. Although it captures key system-level and sample-specific components, it may not fully reflect the complexity of real physiological signals. Future work in this direction could explore more flexible or data-driven generative models to improve the fidelity and expressiveness of learned system representations.

\end{document}